\newtheorem{theorem}{Theorem}[section]
\newtheorem{proposition}{Proposition}[section]
\newtheorem*{proposition*}{Proposition}
\newtheorem{definition}{Definition}[section]
\newtheorem{definition*}{Definition}
\newtheorem{remark*}{Remark}
\newtheorem{lemma}{Lemma}[section]
\newtheorem{lemma*}{Lemma}
\title{Smoothed Energy Guidance: Guiding Diffusion Models with Reduced Energy Curvature of Attention}
\author{%
  Susung Hong\thanks{This work was mostly done at Korea University.} \\
  University of Washington\\
}
\begin{document}

\maketitle

\begin{abstract}

Conditional diffusion models have shown remarkable success in visual content generation, producing high-quality samples across various domains, largely due to classifier-free guidance (CFG). Recent attempts to extend guidance to unconditional models have relied on heuristic techniques, resulting in suboptimal generation quality and unintended effects. In this work, we propose Smoothed Energy Guidance (SEG), a novel training- and condition-free approach that leverages the energy-based perspective of the self-attention mechanism to enhance image generation. By defining the energy of self-attention, we introduce a method to reduce the curvature of the energy landscape of attention and use the output as the unconditional prediction. Practically, we control the curvature of the energy landscape by adjusting the Gaussian kernel parameter while keeping the guidance scale parameter fixed. Additionally, we present a query blurring method that is equivalent to blurring the entire attention weights without incurring quadratic complexity in the number of tokens. In our experiments, SEG achieves a Pareto improvement in both quality and the reduction of side effects. The code is available at \url{https://github.com/SusungHong/SEG-SDXL}.

\end{abstract}

\section{Introduction}

Diffusion models~\cite{ho2020denoising,song2020denoising,song2020score} have emerged as a promising tool for visual content generation, producing high-quality and diverse samples across various domains, including image~\cite{ramesh2022hierarchical,rombach2022high,saharia2022photorealistic,dhariwal2021diffusion,ho2022cascaded,nichol2021glide,batzolis2021conditional,kim2022diffusionclip,epstein2023diffusion,liu2022compositional,phung2023wavelet,peebles2023scalable,blattmann2022retrieval,saharia2022palette,brack2024ledits++,huang2023collaborative,kawar2023imagic}, video~\cite{ho2022imagen,yu2023video,khachatryan2023text2video,hong2023direct2v,ho2022video,blattmann2023align,huang2024free,singer2022make}, and 3D generation~\cite{poole2022dreamfusion,lin2023magic3d,chen2023fantasia3d,li2023sweetdreamer,wang2024prolificdreamer,seo2024retrieval,wang2023score,hong2024debiasing}. The success of these models can be largely attributed to the use of classifier-free guidance (CFG)~\cite{ho2022classifier}, which enables sampling from a sharper distribution, resulting in improved sample quality. However, CFG is not applicable to unconditional image generation, where no specific conditions are provided, creating a disparity between the capabilities of text-conditioned sampling and sampling without text. This disparity results in a restriction in application, \textit{e.g.}, synthesizing images with ControlNet\cite{zhang2023adding} without a text prompt (see the last two columns of Fig.~\ref{fig:teaser}).

Recent literature~\cite{hong2023improving,ahn2024self} has attempted to decouple CFG and image quality by extending guidance to general diffusion models, leveraging their inherent representations~\cite{kwon2022diffusion,park2023understanding,hong2023improving}. Self-attention guidance (SAG)~\cite{hong2023improving} proposes leveraging the intermediate self-attention map of diffusion models to blur the input pixels and provide guidance, while perturbed attention guidance (PAG)~\cite{ahn2024self} perturbs the attention map itself by replacing it with an identity attention map. Despite these efforts, these methods rely on heuristics to make perturbed predictions, resulting in unintended effects such as smoothed-out details, saturation, color shifts, and significant changes in the image structure when given a large guidance scale. Notably, the mathematical underpinnings of these unconditional guidance approaches are not well elucidated.

\begin{figure}[t]
\centering
\includegraphics[width=\linewidth]{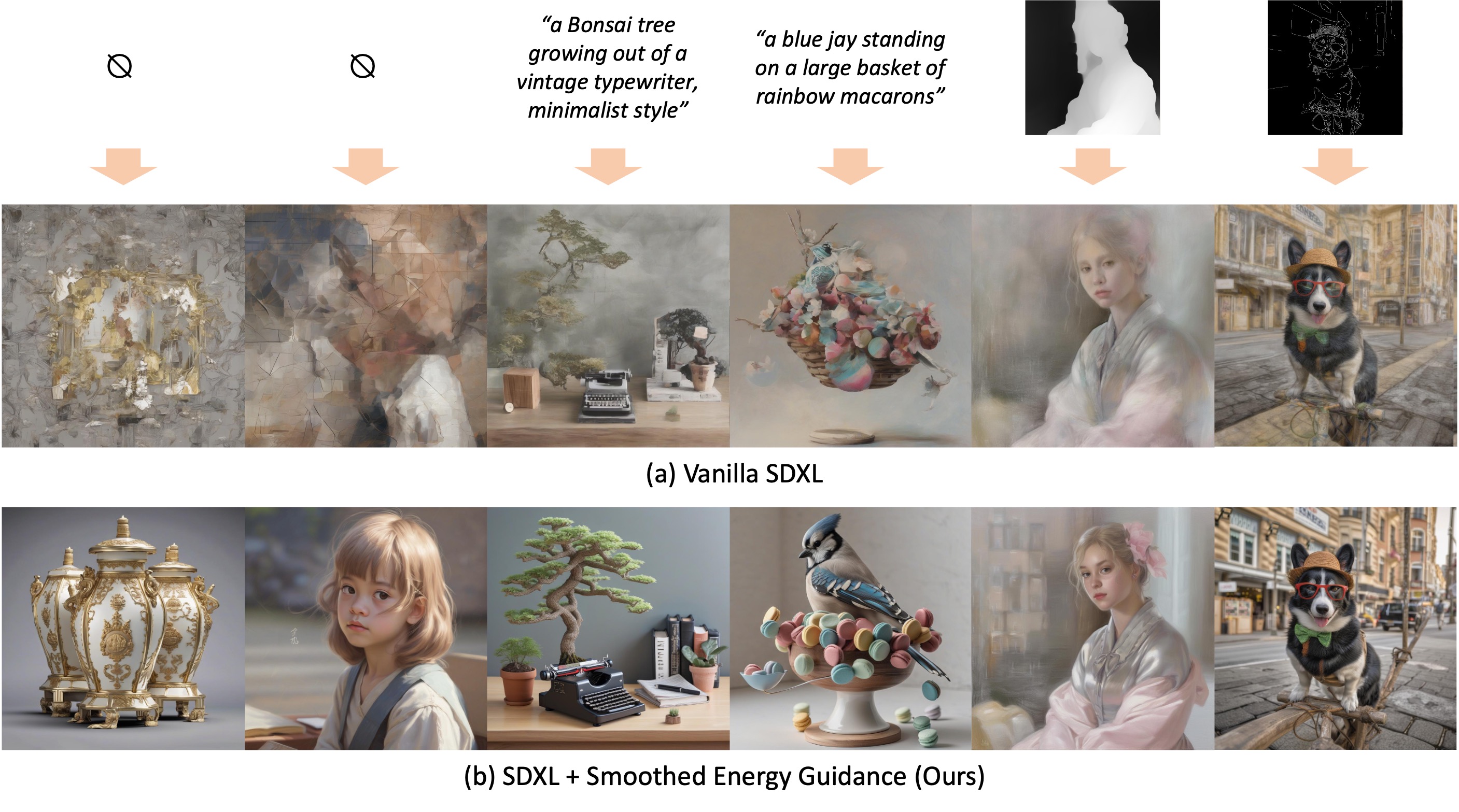}
\vspace{-15pt}
\caption{Teaser. (a) Images sampled from vanilla SDXL~\cite{podell2023sdxl} without any guidance. (b) Images sampled with Smoothed Energy Guidance (Ours). $\varnothing$ denotes that there is no condition given. With various input conditions, and even without any, SEG supports the diffusion model in generating plausible and high-quality images without any training.}
\label{fig:teaser}
\vspace{-10pt}
\end{figure}

In this work, we approach the objective from an energy-based perspective of the self-attention mechanism, which has been previously explored based on its close connection to the Hopfield energy~\cite{ramsauer2020hopfield,park2024energy,demircigil2017model}. Specifically, we start from the definition of the energy of self-attention, where performing a self-attention operation is equivalent to taking a gradient step. In light of this, we propose a tuning- and condition-free method that \textbf{reduces the curvature} of the underlying energy function by directly blurring the attention weights, and then leverages the output as the negative prediction. We call this method \textbf{Smoothed Energy Guidance (SEG)}.

SEG does not merely rely on the guidance scale parameter that cause side effects when its value becomes large. Instead, we can continuously control the original and maximally attenuated curvature of the energy landscape behind the self-attention by simply adjusting the parameter of the Gaussian kernel, with the guidance scale parameter fixed. Additionally, we introduce a novel query blurring technique, which is equivalent to blurring the entire attention weights without incurring quadratic cost in the number of tokens.

We validate the effectiveness of SEG throughout the various experiments without and with text conditions, and ControlNet~\cite{zhang2023adding} trained on canny and depth maps. Based on the attention modulation, SEG results in less structural change from the original prediction compared to previous approaches~\cite{hong2023improving,ahn2024self}, while achieving better sample quality.

\section{Preliminaries}

\subsection{Diffusion models}

Diffusion models~\cite{ho2020denoising,song2020denoising,song2020score} are a class of generative models that generate data through an iterative denoising process. The process of adding noise to an image $\mathbf{x}$ over time $t \in [0, T]$ is governed by the forward stochastic differential equation (SDE):
\begin{equation}
d\mathbf{x} = \mathbf{f}(\mathbf{x}, t)dt + g(t)d\mathbf{w},
\end{equation}
where $\mathbf{f}$ and $g$ are predefined functions that determine the manner in which the noise is added, and $d\mathbf{w}$ denotes a standard Wiener process.

Correspondingly, the denoising process can be described by the reverse SDE:
\begin{equation}
d\mathbf{x} = [\mathbf{f}(\mathbf{x}, t) - g(t)^2 \nabla_\mathbf{x} \log p_t(\mathbf{x})] dt + g(t)d\bar{\mathbf{w}},
\end{equation}
where $\nabla_\mathbf{x} \log p_t(\mathbf{x})$ represents the score of the noisy data distribution and $d\bar{\mathbf{w}}$ denotes the standard Wiener process for the reversed time.

Diffusion models are trained to approximate the score function with $\mathbf{s}_\theta(\mathbf{x}, t) \approx \nabla_\mathbf{x} \log p_t(\mathbf{x})$. To generate an image based on a condition $c$, \textit{e.g.}, a class label or text, one simply needs to train diffusion models to approximate the conditional score function with $\mathbf{s}_\theta(\mathbf{x}, t, c) \approx \nabla_\mathbf{x} \log p_t(\mathbf{x}|c)$ and replace $\nabla_\mathbf{x} \log p_t(\mathbf{x})$ with it in the denoising process. To enhance the quality and faithfulness of the generated samples, classifier-free guidance (CFG)~\cite{ho2022classifier} is widely adopted. Accordingly, the reverse process becomes:
\begin{equation}
d\mathbf{x} = [\mathbf{f}(\mathbf{x}, t) - g(t)^2 (\gamma_\textrm{cfg} \mathbf{s}_\theta(\mathbf{x}, t, c) - (\gamma_\textrm{cfg}-1) \mathbf{s}_\theta(\mathbf{x}, t))] dt + g(t)d\bar{\mathbf{w}}.
\label{eq:cfg}
\end{equation}
Here, $\mathbf{s}_\theta(\mathbf{x}, t)$ is learned by dropping the label by a certain proportion, and $\gamma_\textrm{cfg}$ is a hyperparameter that controls the strength of the guidance. Intuitively, CFG helps us to sample from sharper distribution by conditioning on a class label or text.

\subsection{Energy-based view of attention mechanism}
\label{sec:ebm}

The attention mechanism~\cite{vaswani2017attention}, which has been widely adopted in diffusion models~\cite{ho2020denoising}, has been interpreted through the lens of energy-based models (EBMs)~\cite{park2024energy,ramsauer2020hopfield,demircigil2017model}, especially through its close connection with the Hopfield energy~\cite{demircigil2017model,ramsauer2020hopfield}. In the modern (continuous) Hopfield network, the attention operation can be derived based on the concave-convex procedure (CCCP) from the following energy function~\cite{ramsauer2020hopfield}:
\begin{equation}
E(\boldsymbol{\xi}) = - \mathrm{lse}(\mathbf{X}\boldsymbol{\xi}^{\top}) + \frac{1}{2} \boldsymbol{\xi}\boldsymbol{\xi}^\top,
\label{eq:energy-hopfield}
\end{equation}
where $\boldsymbol{\xi}\in \mathbb{R}^{1\times d}$, $\mathbf{X}\in \mathbb{R}^{N\times d}$, and $\mathrm{lse}$ stands for the \textit{log-sum-exp function}, defined as $\mathrm{lse}(\mathbf{v}) := \log\left(\sum_{i=1}^N e^{v_{i}}\right)$. The quadratic term acts as a regularizer to prevent $\boldsymbol{\xi}$ from exploding~\cite{ramsauer2020hopfield}, while $-\mathrm{lse}(\mathbf{X}\boldsymbol{\xi}^{\top})$ penalizes misalignment between $\mathbf{X}$ and $\boldsymbol{\xi}$.

Mathematically, it turns out that the attention mechanism is equivalent to the update rule of the modern Hopfield network~\cite{demircigil2017model,ramsauer2020hopfield}. Specifically, inspired by the Hopfield energy in \eqref{eq:energy-hopfield}, and noticing that the first term depends on the attention weights, we propose the following energy function for entire self-attention weights in diffusion models:
\begin{definition}[Energy Function for Self-Attention]
Let $\mathbf{Q} \in \mathbb{R}^{(HW)\times d}$ be a matrix of query vectors and $\mathbf{K} \in \mathbb{R}^{(HW)\times d}$ be a matrix of key vectors, where $H$, $W$, and $d$ represent the height, width, and dimension, respectively. Let $\mathbf{A} \in \mathbb{R}^{(HW)\times (HW)} := \mathbf{QK}^\top$. The energy function with respect to entire self-attention weights in diffusion models is defined as:
\begin{align}
E(\mathbf{A}) := \sum_{i=1}^{H}\sum_{j=1}^{W} E'(\mathbf{a}_{:(i,j)}),
\quad E'(\mathbf{a}) := - \mathrm{lse}\left(\mathbf{a}\right) = - \log\left(\sum_{k=1}^H \sum_{l=1}^W e^{a_{(k,l)}}\right).
\label{eq:energy-attn}
\end{align}
\end{definition}
Note that to explicitly denote the spatial dimension, we use the subscript $(x, y)$ to represent the index of a row or column of the matrices. Despite using the definition in \eqref{eq:energy-attn} for the rest of the paper for simplicity, we additionally discuss the dual case, where we use the swapped indexing, in Appendix~\ref{sec:dual}.

This view leads us to an important intuition: the attention operation can be seen as a minimization step on the energy landscape, considering that the first derivative represents the softmax operation which also appears in the attention operation. Building upon this intuition, we argue that Gaussian blurring on the attention weights modulates the underlying landscape to have less curvature, and we demonstrate this in the following sections by analyzing the second derivatives.

\section{Method}

Our aim is to theoretically derive the effect of Gaussian blur applied on the attention weights, which in the end attenuates the curvature of the underlying energy function. Then, utilizing this fact, we develop attention-based drop-in diffusion guidance that enhances the quality of the generated samples, regardless of whether an explicit condition is given. In Section~\ref{sec:blur}, we claim some useful properties of Gaussian blur: that it preserves mean, reduces variance, and thus decreases the $\textrm{lse}$ value. In Section~\ref{sec:energy-landscape}, we find that the curvature of the energy landscape is attenuated by the attention blur operation, leading naturally to a blunter prediction for guidance. And finally, in Section~\ref{sec:seg}, built upon this fact, we define Smoothed Energy Guidance (SEG) and propose the equivalent query blurring method, which can perform attention blurring while avoiding quadratic complexity in the number of tokens.

\subsection{Gaussian blur to attention weights}
\label{sec:blur}

In this section, we derive some important properties of the Gaussian blur with the aim of figuring out the variation of the energy landscape. To this end, we start from some mathematical underpinnings on applying Gaussian blur to attention weights.

A 2D Gaussian filter is a convolution kernel that uses a 2D Gaussian function to assign weights to neighboring pixels. The 2D Gaussian function is defined as:
\begin{equation*}
G(x, y) = \frac{1}{2\pi\sigma^2} e^{-\frac{(x-\mu_x)^2 + (y-\mu_y)^2}{2\sigma^2}}
\end{equation*}
where $\mu_x$ and $\mu_y$ are the means in the $x$ and $y$ directions, and $\sigma$ is the standard deviation.
The 2D Gaussian filter possesses symmetry, \textit{i.e.}, $G(x, y) = G(-x, -y)$, and normalization, \textit{i.e.}, $\iint G(x, y)dxdy = 1$. In practice, we use a discretized version of the Gaussian filter with a finite kernel size depending on $\sigma$, normalized to sum to $1$.

\begin{lemma}
Spatially applying a 2D Gaussian blur to the attention weights $\mathbf{a} := \mathbf{Q}\mathbf{k}^\top$ preserves the average $\mathbb{E}_{i, j}[a_{(i,j)}]$. In addition, the variance monotonically decreases every time we apply the Gaussian blur.
\label{prop:mean-variance}
\end{lemma}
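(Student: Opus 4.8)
The plan is to treat the attention-weight column $\mathbf{a}$ as an $H\times W$ array of real numbers and to regard the 2D Gaussian blur as convolution with a nonnegative kernel $G$ that sums to $1$, writing the blurred array as $b_{(i,j)} = \sum_{k,l} G_{(i-k,\,j-l)}\, a_{(k,l)}$. I would prove the two assertions separately: the mean by a direct interchange of summation, and the variance by a convexity (doubly-stochastic) argument that also delivers the monotonicity under repeated blurring for free.

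For the average, I would sum the blur identity over all spatial indices and swap the order of summation, obtaining $\sum_{i,j} b_{(i,j)} = \sum_{k,l} a_{(k,l)}\big(\sum_{i,j} G_{(i-k,\,j-l)}\big)$. For each fixed $(k,l)$ the inner sum ranges over the entire kernel support and equals $1$ by the normalization stated in the preliminaries (using periodic or reflect boundary handling so that every pixel effectively sees the full, renormalized kernel). Dividing by $HW$ yields $\mathbb{E}_{i,j}[b_{(i,j)}] = \mathbb{E}_{i,j}[a_{(i,j)}]$, so the mean is preserved exactly.

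For the variance, I would observe that each output entry is a convex combination of input entries, since the weights $G_{(i-k,\,j-l)}$ are nonnegative and sum to $1$. Jensen's inequality then gives $b_{(i,j)}^2 \le \sum_{k,l} G_{(i-k,\,j-l)}\, a_{(k,l)}^2$; summing over $(i,j)$ and using that the kernel is also column-stochastic (each $\sum_{i,j} G_{(i-k,\,j-l)}=1$) produces $\sum_{i,j} b_{(i,j)}^2 \le \sum_{i,j} a_{(i,j)}^2$. Combining this with the preserved mean and the identity $\mathrm{Var} = \mathbb{E}[\,\cdot^2\,]-(\mathbb{E}[\,\cdot\,])^2$ shows the variance does not increase. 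The clause ``every time we apply'' then follows by induction, since each further blur is another doubly-stochastic averaging, so the variances form a nonincreasing sequence.

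The main obstacle is upgrading non-increase to genuine decrease while keeping the finite-grid boundary bookkeeping honest. For strictness I would note that Jensen's inequality is strict unless, at every pixel, the averaged values $a_{(k,l)}$ over the kernel support coincide, i.e. unless $\mathbf{a}$ is already constant on the grid. An equivalent and more transparent route is to pass to the discrete Fourier domain, where the blur acts as $\hat{b}(\omega)=\hat{G}(\omega)\hat{a}(\omega)$: the Gaussian satisfies $\hat{G}(0)=1$ and $0<|\hat{G}(\omega)|<1$ for every nonzero frequency (a nonnegative symmetric kernel supported on more than one point cannot keep unit modulus away from the zero frequency), so the energy of each nonzero frequency strictly contracts while the mean-carrying zero frequency is untouched. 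The remaining care is purely the edge handling, ensuring the kernel still sums to $1$ at the boundary so that both the summation swap and the column-stochasticity used above hold exactly.
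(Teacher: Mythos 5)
Your proof is correct, and for the variance claim it takes a genuinely different route from the paper. The paper expands $\mathrm{Var}_{i,j}[\tilde{a}_{(i,j)}]$ into a double sum of empirical covariances between shifted copies of the array, $\sum_{m,n,r,s} G(m,n)G(r,s)\,\mathrm{Cov}[a_{(i+m,j+n)},a_{(i+r,j+s)}]$, and then applies the Cauchy--Schwarz inequality twice (once to bound each covariance by the geometric mean of variances, once to bound the resulting weighted sum of square roots), finally invoking a padding assumption so that every shifted copy has the same variance as the original. You instead use the doubly-stochastic averaging argument: row-stochasticity plus Jensen gives $b_{(i,j)}^2 \le \sum_{k,l} G_{(i-k,j-l)}a_{(k,l)}^2$ pointwise, column-stochasticity lets you sum this to $\sum b^2 \le \sum a^2$, and mean preservation converts the second-moment bound into a variance bound; monotonicity under repeated blurring follows by induction. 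Your approach is more elementary and buys two things the paper does not have: (i) it cleanly separates the two stochasticity properties actually needed (the paper's translation-invariance assumption is doing the work of column-stochasticity implicitly), and (ii) your Fourier argument upgrades the paper's weak inequality to a strict decrease whenever $\mathbf{a}$ is non-constant --- notable because the lemma asserts monotone decrease but the paper's own proof only establishes $\le$. One caveat to tighten: reflect padding does not in general make the effective blur matrix column-stochastic (only row-stochastic), so the exact interchange-of-summation and the $\sum b^2 \le \sum a^2$ step really require periodic boundary handling or per-pixel kernel renormalization; this is the same boundary bookkeeping the paper sweeps into its ``properly padded to maintain consistent statistics'' assumption, so it is a shared caveat rather than a gap in your argument.
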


\textit{Proof sketch.}
Applying a 2D Gaussian filter to the attention weights $a_{(i, j)}$ yields the blurred values $\tilde{a}_{(i, j)}$:
\begin{equation*}
\tilde{a}_{(i,j)} = \sum_{m=-k}^{k} \sum_{n=-k}^{k} G(m, n) \cdot a_{(i+m, j+n)}
\end{equation*}
where $k$ is the filter size, $G(m, n)$ is the Gaussian filter value at position $(m, n)$, and $a_{(i+m, j+n)}$ is the attention weight at position $(i+m, j+n)$. Since the Gaussian filter is symmetric and normalized, it can be shown that the mean of the blurred attention weights is equal to the mean of the original attention weights. Similarly, we can show that the variance monotonically decreases when we apply a 2D Gaussian filter. See Appendix \ref{appendix:mean-variance} for the complete proof.

Note that this fact also implies that blurring with a Gaussian filter with a larger standard deviation causes a greater decrease in the variance of attention weights. This is because a Gaussian filter with a larger standard deviation can always be represented as a convolution of two filters with smaller standard deviations, due to the associativity of the convolution operation.

Finally, we show that applying a 2D Gaussian blur to attention weights increases the $\mathrm{lse}$ value in \eqref{eq:energy-attn}, \textit{i.e.}, increases the energy in \eqref{eq:energy-attn}. This provides a bit of intuition about the underlying energy landscape, yet it is more prominently utilized in the claims in the following sections.

\begin{lemma}
Applying a 2D Gaussian blur to attention weights $\mathbf{a} := \mathbf{Q}\mathbf{k}^\top$ increases the $\mathrm{lse}$ term when we consider the second-order Taylor series approximation of the exponential function around the mean $\mu := \mathbb{E}_{i,j}[{a}_{(i,j)}]$. Consequently, the maximum is achieved when the attention is uniform, \textit{i.e.}, $a_{(i,j)} = a_{(k,l)} \; \forall i,j,k,l$. This corresponds to the case when we apply the Gaussian blur with $\sigma \to \infty$.
\label{prop:lse}
\end{lemma}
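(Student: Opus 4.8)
The plan is to reduce the statement to the monotonicity of $\mathrm{lse}$ in the variance of the attention weights, and then feed in the variance behaviour already established in Lemma~\ref{prop:mean-variance}. Write $N := HW$ for the number of entries of $\mathbf{a}$, let $\mu := \mathbb{E}_{i,j}[a_{(i,j)}]$, and let $s^2 := \mathbb{E}_{i,j}[(a_{(i,j)} - \mu)^2]$ denote the (population) variance. First I would expand the exponential to second order about the mean,
\begin{equation*}
e^{a_{(i,j)}} \approx e^{\mu}\left(1 + (a_{(i,j)} - \mu) + \tfrac{1}{2}(a_{(i,j)} - \mu)^2\right),
\end{equation*}
which is exactly the approximation the lemma explicitly invokes.

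Summing over all $(i,j)$ is the heart of the computation. The zeroth-order term contributes $N e^{\mu}$; the first-order term vanishes identically because $\sum_{i,j}(a_{(i,j)} - \mu) = 0$ by definition of the mean; and the second-order term collapses to $\tfrac{1}{2} e^{\mu}\sum_{i,j}(a_{(i,j)} - \mu)^2 = \tfrac{N}{2} e^{\mu} s^2$. Hence
\begin{equation*}
\sum_{i,j} e^{a_{(i,j)}} \approx N e^{\mu}\left(1 + \tfrac{1}{2} s^2\right), \qquad \mathrm{lse}(\mathbf{a}) \approx \mu + \log N + \log\!\left(1 + \tfrac{1}{2}s^2\right).
\end{equation*}
The key structural observation is that, with $\mu$ and $N$ held fixed, this expression is a strictly monotone function of the variance $s^2$, since $\frac{d}{d s^2}\log(1 + \tfrac{1}{2}s^2) > 0$, and it attains its extreme value precisely at $s^2 = 0$.

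It then remains only to transport the variance behaviour through this monotone map. By Lemma~\ref{prop:mean-variance}, a Gaussian blur leaves $\mu$ unchanged and decreases $s^2$ at every application; composing with the monotone function $s^2 \mapsto \mu + \log N + \log(1 + \tfrac12 s^2)$ shows that each blur moves $\mathrm{lse}(\mathbf{a})$ toward its extremum, equivalently moving the energy $E' = -\mathrm{lse}$ in the opposite direction. The extremal configuration $s^2 = 0$ forces $a_{(i,j)} = a_{(k,l)}$ for all indices, i.e. uniform attention, and since larger $\sigma$ drives the variance monotonically to $0$, this uniform state is exactly the $\sigma \to \infty$ limit, completing the chain of implications and pinning the extremum there.

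The main obstacle is not the algebra but justifying the truncation. The conclusion is stated only under the second-order Taylor approximation, so I would be careful to specify the regime in which it is legitimate: when the deviations $|a_{(i,j)} - \mu|$ are small, the neglected cubic and higher-order terms are of lower order than the $s^2$ term and cannot reverse the sign of the change, whereas for large deviations the odd-order terms need not cancel after summation and the clean ``variance controls $\mathrm{lse}$'' picture can only be asserted approximately. A secondary point to handle cleanly is strictness: I would confirm that the variance decrease in Lemma~\ref{prop:mean-variance} is strict whenever $\mathbf{a}$ is non-uniform, so that the monotone map yields a strict rather than merely weak change, and that $1 + \tfrac12 s^2 > 0$ always holds so the logarithm and its monotonicity are well defined.
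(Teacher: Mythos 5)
Your proof is correct and, for the central inequality, follows the same path as the paper: the identical second-order Taylor expansion about the mean, cancellation of the first-order term because $\sum_{i,j}(a_{(i,j)}-\mu)=0$, and reduction to the mean-preservation/variance-decrease facts of Lemma~\ref{prop:mean-variance}. Where you genuinely diverge is the second claim. The paper pins down the extremal configuration with Lagrange multipliers applied to the \emph{exact} sum $\sum_{i,j} e^{a_{(i,j)}}$ under a fixed-sum constraint, finding the critical point $a_{(i,j)} = c/(HW)$; you instead exploit the closed form $\mathrm{lse}(\mathbf{a}) \approx \mu + \log N + \log\left(1 + \tfrac12 s^2\right)$, observe it is strictly increasing in $s^2$ with $\mu$ fixed, and note that $s^2 = 0$ simultaneously characterizes the extremum and forces uniformity. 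Your route is more elementary (no constrained optimization) and makes the ``variance controls $\mathrm{lse}$'' mechanism explicit, though it only locates the extremum of the \emph{approximated} functional, whereas the Lagrange argument holds for the exact sum of exponentials. Two further points in your favor: you resolve the lemma's ambiguous wording in the same direction the paper's appendix actually proves (blurring \emph{decreases} $\mathrm{lse}$, hence increases the energy $E' = -\mathrm{lse}$, so the ``maximum'' at uniform attention is a maximum of energy), and you flag two gaps the paper glosses over --- the regime of validity of the truncation, and the fact that Lemma~\ref{prop:mean-variance} only yields a weak variance inequality while the appendix proof invokes a strict one.
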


\textit{Proof sketch.} Applying the second-order Taylor series approximation around the mean $\mu$, and using Proposition \ref{prop:mean-variance}, we show that the second-order approximation of $\mathrm{lse}({\mathbf{a}})$ is larger than or equal to that of $\mathrm{lse}(\tilde{\mathbf{a}})$. Subsequently, we introduce Lagrange multipliers to find the maximum, which gives us the result, $a_{(i,j)} = a_{(k,l)} \; \forall i,j,k,l$. We leave the full proof in Appendix \ref{appendix:lse}.

\subsection{Analysis of the energy landscape}
\label{sec:energy-landscape}

In this section, we demonstrate that applying a 2D Gaussian blur to the attention weights before the softmax operation results in computing the updated value with reduced curvature of the underlying energy function. To this end, we analyze the Gaussian curvature before and after blurring the attention weights. This is closely related to the Hessian of the energy function.

\begin{theorem}
Let the attention weights be defined as $\mathbf{a} := \mathbf{Q}\mathbf{k}^\top$. Consider the energy function in \eqref{eq:energy-attn}. Then, applying a Gaussian blur to the attention weights $\mathbf{a}$ before the softmax operation results in the attenuation of the Gaussian curvature of the underlying energy function where gradient descent is performed.
\label{prop:curvature}
\end{theorem}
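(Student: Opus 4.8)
The plan is to connect the ``Gaussian curvature of the energy landscape'' to the Hessian of $E'$ with respect to the attention vector $\mathbf{a}$, and then show that Gaussian blurring shrinks this Hessian in a precise sense. First I would compute the derivatives of $E'(\mathbf{a}) = -\mathrm{lse}(\mathbf{a})$. The gradient is $\nabla E' = -\,\mathrm{softmax}(\mathbf{a}) =: -\mathbf{p}$, which recovers the claim that an attention step is a gradient step on the energy. The Hessian is then $\nabla^2 E' = -(\mathrm{diag}(\mathbf{p}) - \mathbf{p}\mathbf{p}^\top)$, i.e.\ the negative of the softmax covariance matrix. The key observation is that this Hessian is controlled by the spread of the logits $\mathbf{a}$: when the entries of $\mathbf{a}$ are nearly equal, $\mathbf{p}$ is close to uniform and $\mathrm{diag}(\mathbf{p}) - \mathbf{p}\mathbf{p}^\top$ has small entries, whereas when $\mathbf{a}$ has large spread, $\mathbf{p}$ concentrates and the curvature is large. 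I would make this quantitative through the eigenvalues of the softmax covariance, which measure the local curvature of the energy surface.

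Next I would bring in Lemma~\ref{prop:mean-variance} and Lemma~\ref{prop:lse}. The crucial input is that Gaussian blurring preserves the mean of $\mathbf{a}$ while \emph{monotonically reducing its variance}, driving the logits toward a constant vector as $\sigma$ grows, with the uniform attention $a_{(i,j)} = a_{(k,l)}$ recovered in the limit $\sigma \to \infty$. I would argue that reduced variance of the logits implies that the blurred softmax distribution $\tilde{\mathbf{p}}$ is flatter (closer to uniform) than $\mathbf{p}$, hence its covariance matrix $\mathrm{diag}(\tilde{\mathbf{p}}) - \tilde{\mathbf{p}}\tilde{\mathbf{p}}^\top$ is smaller, yielding a Hessian of smaller magnitude. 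The Gaussian curvature, being (up to the standard normalization factor involving the gradient norm) a product of principal curvatures expressible through the determinant of the Hessian restricted to the tangent space, then shrinks accordingly. I would connect this to the $\mathrm{lse}$ result of Lemma~\ref{prop:lse}: since $-E' = \mathrm{lse}$ increases under blurring and is maximized at the uniform point, the energy decreases toward its flat minimum-curvature configuration, which is exactly where the Hessian degenerates.

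The main obstacle I anticipate is making the phrase ``Gaussian curvature'' precise and proving monotone attenuation rather than just the $\sigma \to \infty$ limit. The genuinely clean statement is the limiting one (uniform attention gives vanishing curvature), which follows directly from Lemma~\ref{prop:lse}. To get monotonicity along the blurring process, I would use the fact noted after Lemma~\ref{prop:mean-variance}: a Gaussian of larger $\sigma$ factors as a convolution of smaller-$\sigma$ Gaussians, so each incremental blur step further reduces logit variance, and I would show each such step can only decrease the relevant curvature functional. The delicate part is that ``variance of logits decreases'' does not by itself give a term-by-term comparison of the full Hessian matrices in the Loewner order; I would therefore either work with a scalar curvature surrogate (e.g.\ via the determinant or trace of the softmax covariance, controlled by the logit spread), or restrict attention to the second-order Taylor regime already used in Lemma~\ref{prop:lse}, where the covariance depends on $\mathbf{a}$ only through its variance and the comparison becomes transparent. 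I expect the bulk of the effort to lie in this linkage from the variance reduction of Lemma~\ref{prop:mean-variance} to a monotone statement about the curvature, with the remaining differential-geometry bookkeeping being routine.
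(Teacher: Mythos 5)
Your overall frame---take the Hessian of $E'(\mathbf{a})=-\mathrm{lse}(\mathbf{a})$, use a determinant-type scalar as the curvature proxy, and feed in Lemmas~\ref{prop:mean-variance} and~\ref{prop:lse}---matches the paper's proof, but the central monotone step you propose goes in the wrong direction, and this is a genuine gap rather than a bookkeeping issue. You claim that a flatter (closer-to-uniform) softmax $\tilde{\mathbf{p}}$ has a \emph{smaller} covariance $\mathrm{diag}(\tilde{\mathbf{p}})-\tilde{\mathbf{p}}\tilde{\mathbf{p}}^\top$, and that the Hessian ``degenerates'' at the uniform configuration. The opposite is true: $\mathrm{lse}$ is a smoothed maximum, so it is asymptotically linear (near-zero curvature) where one logit dominates and is \emph{most} curved where the logits are equal. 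Concretely, the trace of the softmax covariance is $1-\sum_i p_i^2$, which is maximized at uniform $\mathbf{p}$ and tends to $0$ under concentration; likewise the product of its diagonal entries $\prod_i p_i(1-p_i)$ is maximized at uniform. So blurring the logits toward uniformity makes the softmax covariance \emph{larger} by every scalar surrogate you suggest, and retreating to the second-order Taylor regime of Lemma~\ref{prop:lse} does not repair the sign. (Note also that the exact determinant is useless as a surrogate: the softmax covariance annihilates the all-ones vector, so its determinant is identically zero for both the original and blurred weights; the paper needs its diagonal-dominance approximation precisely to sidestep this rank deficiency.)

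What your proposal is missing is the ingredient that actually produces attenuation in the paper's proof: the blurred prediction is a gradient step on the \emph{composed} energy (the $\mathrm{lse}$ applied to $\mathbf{B}\mathbf{a}$, with $\mathbf{B}$ the Toeplitz matrix of the blur), so the Hessian of the new energy carries factors of $\mathbf{B}$ through the chain rule. The paper writes $\tilde{h}_{ij}=(\xi(\tilde{\mathbf{a}})_i-\delta_{ij})\,\xi(\tilde{\mathbf{a}})_j\, b_{ij}$, and its bound $|\det(\tilde{\mathbf{H}})|\approx\prod_i \xi(\tilde{\mathbf{a}})_i\, b_{ii}$ hinges on the kernel property $0\le b_{ii}<1$; it is these $b_{ii}$ factors---not any flattening of the softmax distribution---that shrink the curvature. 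Your comparison, which evaluates the same softmax covariance at $\mathbf{a}$ versus $\tilde{\mathbf{a}}$ with no $\mathbf{B}$ factors anywhere, therefore cannot yield the theorem. (As a side remark, the middle softmax-comparison step in the paper's own inequality chain is itself questionable for exactly the reason above---$\prod_i \xi(\cdot)_i$ increases toward uniformity---but the paper's conclusion ultimately leans on the $b_{ii}$ factors, of which your argument has no analogue.)
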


\textit{Proof sketch.}
Let $\mathbf{H}$ denote the Hessian of the original energy function, \textit{i.e.}, the derivative of the negative softmax, and $\tilde{\mathbf{H}}$ denote the Hessian of the new energy function associated with blurred attention weights. Furthermore, let $b_{ij}$ denote the $i$-th row, $j$-th column entry in the Toeplitz matrix $\mathbf{B}$ representing the Gaussian blur. Calculating the derivatives, we have the elements of the Hessians, $h_{ij} = (\xi(\mathbf{a})_i - \delta_{ij})\xi(\mathbf{a})_j$ and $\tilde{h}_{ij} = (\xi(\tilde{\mathbf{a}})_i - \delta_{ij})\xi(\tilde{\mathbf{a}})_j b_{ij}$. Using Lemmas \ref{prop:mean-variance} and \ref{prop:lse} and under reasonable assumptions, we observe that $|\det(\mathbf{H})| > |\det(\tilde{\mathbf{H}})|$, which implies that the minimization step is performed on a smoother energy landscape with attenuated Gaussian curvature. The full proof is in Appendix \ref{appendix:curvature}.

To provide more intuition about what is actually happening and how we utilize this property in the later section, it is intriguing to consider the attenuating effect on the curvature in analogy to classifier-free guidance (CFG). CFG uses the difference between the prediction based on the sharper conditional distribution and the prediction based on the smoother unconditional distribution to guide the sampling process. By analogy, we propose a method to make the landscape of the energy function smoother to guide the sampling process, as opposed to the original (sharper) energy landscape.

From a probabilistic perspective, the energy is associated with the likelihood of the attention weights in terms of the Boltzmann distribution conditioned on a given configuration, \textit{i.e.}, the feature map. Blurring the attention weights diminishes this likelihood as shown in Lemma~\ref{prop:lse}, and also reduces the curvature of the distribution as shown in Theorem~\ref{prop:curvature}.

\subsection{Smoothed energy guidance for diffusion models}
\label{sec:seg}

Based on the above observation that the Gaussian blur on attention weights attenuates the curvature of the energy function, we propose Smoothed Energy Guidance (SEG) in this section. For brevity, we redefine the unconditional score prediction as $\mathbf{s}_\theta(\mathbf{x}, t)$, and the unconditional score prediction with the energy curvature reduced as $\tilde{\mathbf{s}}_\theta(\mathbf{x}, t)$. Specifically, $\tilde{\mathbf{s}}_\theta(\mathbf{x}, t)$ is the prediction with the attention weights blurred using a 2D Gaussian filter $G$ with the standard deviation $\sigma$. We formulate the process as:
\begin{equation}
(\mathbf{QK}^\top)_\mathrm{seg} = G * (\mathbf{QK}^\top),
\label{eq:attn-blur}
\end{equation}
where $*$ denotes the 2D convolution operator. Then, we replace the original attention weights with $(\mathbf{QK}^\top)_\mathrm{seg}$ and compute the final value as in ordinary self-attention.

For practical purposes when the number of tokens is large, we propose an efficient computation of \eqref{eq:attn-blur} using the property of a linear map, since the convolution operation is linear. Concretely, blurring queries is exactly the same as blurring the entire attention weights, and we propose the following proposition to justify our claim.

\begin{proposition}
Let $\mathbf{Q}$ and $\mathbf{K}$ be the query and key matrices in self-attention, and let $G$ be a 2D Gaussian filter. Blurring the attention weights with $G$ is equivalent to blurring the query matrix $\mathbf{Q}$ with $G$ and then computing the attention weights.
\label{prop:query-blur}
\end{proposition}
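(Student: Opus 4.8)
The plan is to recast both operations as matrix products and then invoke associativity of matrix multiplication. The essential observation is that a 2D Gaussian blur applied spatially to the query index is a fixed linear map on the $HW$ spatial positions, so it can be written as left-multiplication by the Toeplitz matrix $\mathbf{B} \in \mathbb{R}^{(HW)\times(HW)}$ already introduced in the proof of Theorem~\ref{prop:curvature}. Crucially, this blur acts only on the shared spatial (row/query) index: it mixes the $HW$ positions among themselves while treating each of the $d$ feature channels, and each of the $HW$ key columns, identically and independently.

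First I would fix notation: with $\mathbf{A} := \mathbf{QK}^\top$, whose rows are indexed by query spatial positions, ``blurring the entire attention weights spatially'' means applying the blur down each column of $\mathbf{A}$ over the query index, i.e. $G * \mathbf{A} = \mathbf{B}\mathbf{A} = \mathbf{B}(\mathbf{Q}\mathbf{K}^\top)$. This matches the per-key blurring used in Lemmas~\ref{prop:mean-variance} and \ref{prop:lse}, applied simultaneously to every key column. Next, ``blurring $\mathbf{Q}$ and then computing the attention weights'' means blurring each of the $d$ columns of $\mathbf{Q}$ spatially --- again a left-multiplication by the same $\mathbf{B}$ --- and then right-multiplying by $\mathbf{K}^\top$, giving $(\mathbf{B}\mathbf{Q})\mathbf{K}^\top$.

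The proof then concludes in one line by associativity: $\mathbf{B}(\mathbf{Q}\mathbf{K}^\top) = (\mathbf{B}\mathbf{Q})\mathbf{K}^\top$. Equivalently, entrywise one checks that both sides equal $\sum_{m,n} G(m,n)\, \mathbf{q}_{(i+m,j+n)} \cdot \mathbf{k}_l$ for the entry at query position $(i,j)$ against key $\mathbf{k}_l$, which follows from the linearity of the inner product in its query argument. I would present the matrix form as the main argument and mention the entrywise identity as a sanity check. Finally I would record the complexity payoff that motivates the proposition: forming $\mathbf{B}\mathbf{Q}$ touches only the $(HW)\times d$ query matrix (cost linear in the number of tokens, up to the kernel size), whereas forming $\mathbf{B}\mathbf{A}$ operates on the $(HW)\times(HW)$ matrix $\mathbf{A}$ and is therefore quadratic in the number of tokens.

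As for the main obstacle: mathematically there is none of substance --- the result is an instance of associativity once the blur is correctly identified as a left-multiplication. The only point requiring care, and the one I would state explicitly, is justifying that the spatial blur factors as a single matrix $\mathbf{B}$ acting on the query/row index alone, commuting past the right-multiplication by $\mathbf{K}^\top$; this rests on the blur being applied identically across feature channels and key columns without mixing them. Making that structural claim precise, rather than the trivial algebra that follows from it, is the only thing worth dwelling on.
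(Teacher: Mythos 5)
Your proof is correct and takes essentially the same route as the paper: both represent the spatial blur as left-multiplication by a Toeplitz matrix $\mathbf{B}$ and conclude by associativity, $\mathbf{B}(\mathbf{Q}\mathbf{K}^\top) = (\mathbf{B}\mathbf{Q})\mathbf{K}^\top = (G \ast \mathbf{Q})\mathbf{K}^\top$. Your added care in justifying that the blur acts only on the query/row index (and the entrywise check and complexity remark) goes slightly beyond the paper's two-line argument but does not change the approach.
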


\begin{proof}
Since the convolution operation is linear, we can always find a Toeplitz matrix $\mathbf{B}$ such that:
\begin{equation}
G \ast (\mathbf{QK}^{\top}) = \mathbf{B}(\mathbf{QK}^{\top}),
\label{eq:toeplitz}
\end{equation}
where $\ast$ denotes the 2D convolution operation. Using the properties of matrix multiplication, we can rewrite \eqref{eq:toeplitz} as:
\begin{equation}
\mathbf{B}(\mathbf{QK}^{\top}) = (\mathbf{BQ})\mathbf{K}^{\top} = (G \ast \mathbf{Q})\mathbf{K}^{\top}.
\end{equation}
\end{proof}
Finally, SEG is formulated as follows:
\begin{equation}
d\mathbf{x} = [\mathbf{f}(\mathbf{x}, t) - g(t)^2 (\gamma_\mathrm{seg} \mathbf{s}_\theta(\mathbf{x}, t) - (\gamma_\mathrm{seg} - 1) \tilde{\mathbf{s}}_\theta(\mathbf{x}, t))] dt + g(t)d\bar{\mathbf{w}},
\label{eq:seg}
\end{equation}
where $\gamma_\mathrm{seg}$ denotes the guidance scale of SEG.

In a straightforward manner, as SEG does not rely on external conditions, it can be used for conditional sampling strategies such as CFG~\cite{ho2022classifier} and ControlNet~\cite{zhang2023adding}. For the combinatorial sampling with CFG, following \cite{hong2023improving}, we simply extend \eqref{eq:seg} for improved conditional sampling with both SEG and CFG as follows:
\begin{equation}
d\mathbf{x} = [\mathbf{f}(\mathbf{x}, t) - g(t)^2 ((1-\gamma_\mathrm{cfg}+\gamma_\mathrm{seg}) \mathbf{s}_\theta(\mathbf{x}, t) + \gamma_\mathrm{cfg}\mathbf{s}_\theta(\mathbf{x}, t, c) - \gamma_\mathrm{seg} \tilde{\mathbf{s}}_\theta(\mathbf{x}, t))] dt + g(t)d\bar{\mathbf{w}},
\label{eq:seg+cfg}
\end{equation}
which is an intuitive result, as the update rule moves $x$ towards the conditional prediction while keeping it far from the prediction with blurred attention weights.

We are likely to get a result with saturation when using a large guidance scale, such as with classifier-free guidance (CFG)~\cite{ho2022classifier}, self-attention guidance (SAG)~\cite{hong2023improving}, and perturbed attention guidance (PAG)~\cite{ahn2024self}. This is a significant caveat since we need to increase the scale to achieve a maximum effect with these methods. Contrary to this, we can fix the scale of SEG as justified in Sec.~\ref{sec:abl} and control its maximum effect through $\sigma$ of the Gaussian blur, making the choice more flexible. For $\sigma$, two extreme cases are recognized. If $\sigma \to 0$, the blurred attention weights remain the same as the original, while when $\sigma \to \infty$, the attention weights merely adopt a single mean value across spatial axes. We find that even the latter extreme case results in a high-quality outcome, corroborating that we can control the quality to the limit without saturation.

\begin{figure}[t]
\centering
\includegraphics[width=\linewidth]{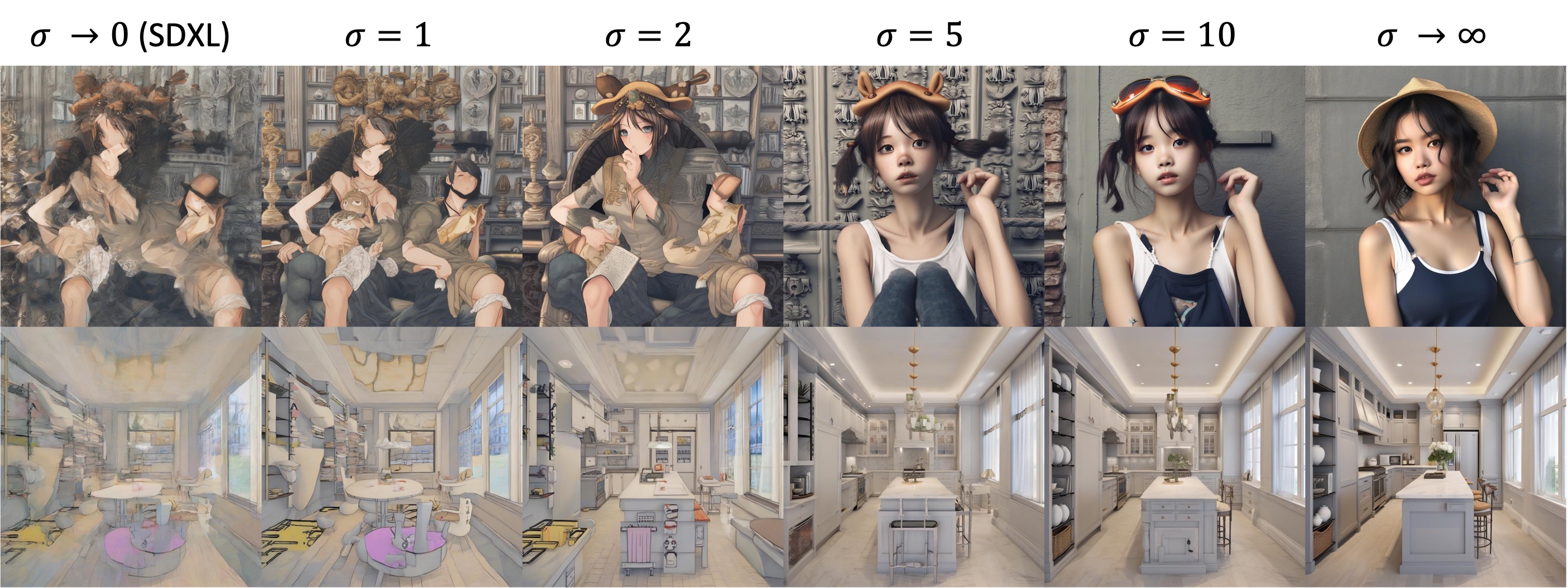}
\vspace{-20pt}
\caption{Unconditional generation using SEG.}
\label{fig:seg-uncond}
\vspace{-10pt}
\end{figure}

\begin{figure}[t]
\centering
\includegraphics[width=\linewidth]{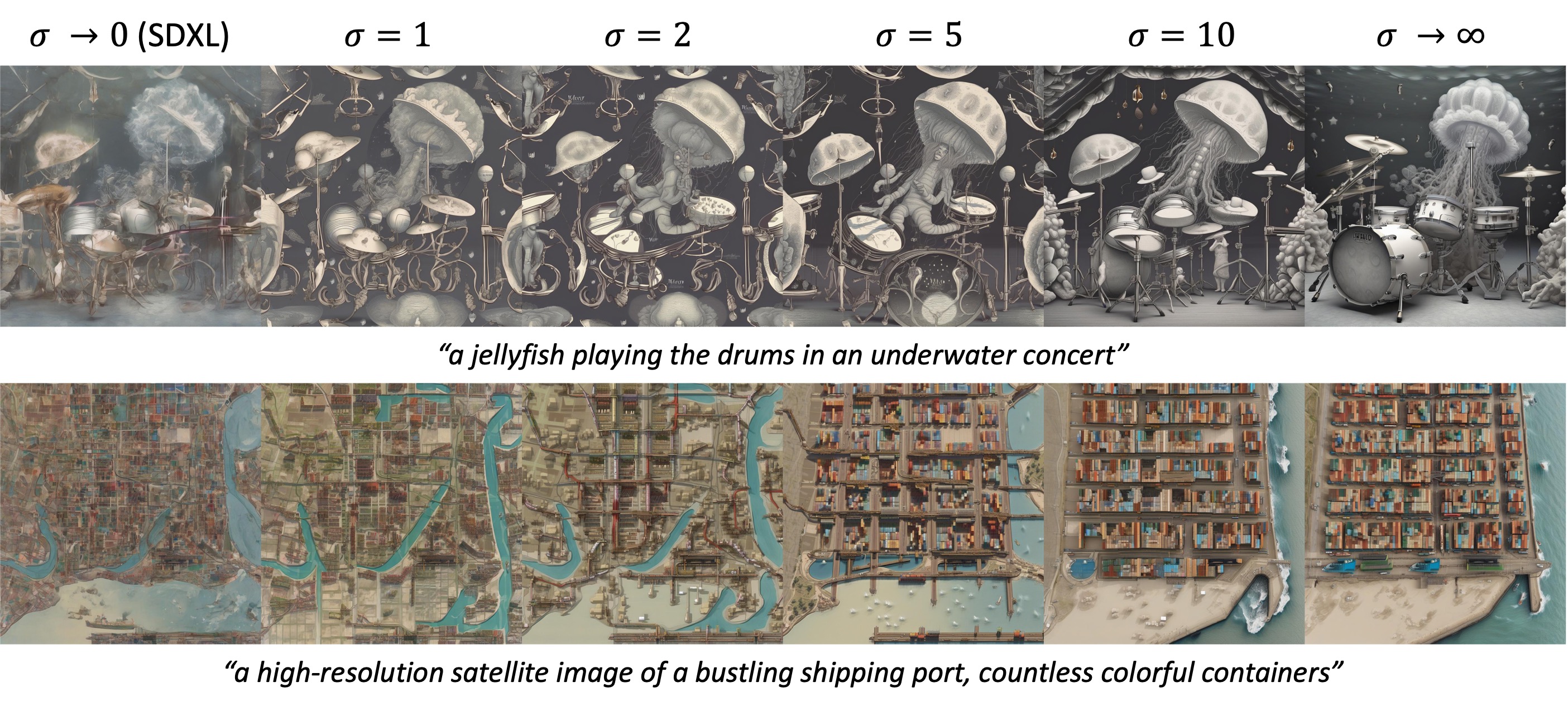}
\vspace{-20pt}
\caption{Text-conditional generation using SEG.}
\label{fig:seg-cond}
\vspace{-10pt}
\end{figure}

\begin{figure}[t]
\centering
\includegraphics[width=\linewidth]{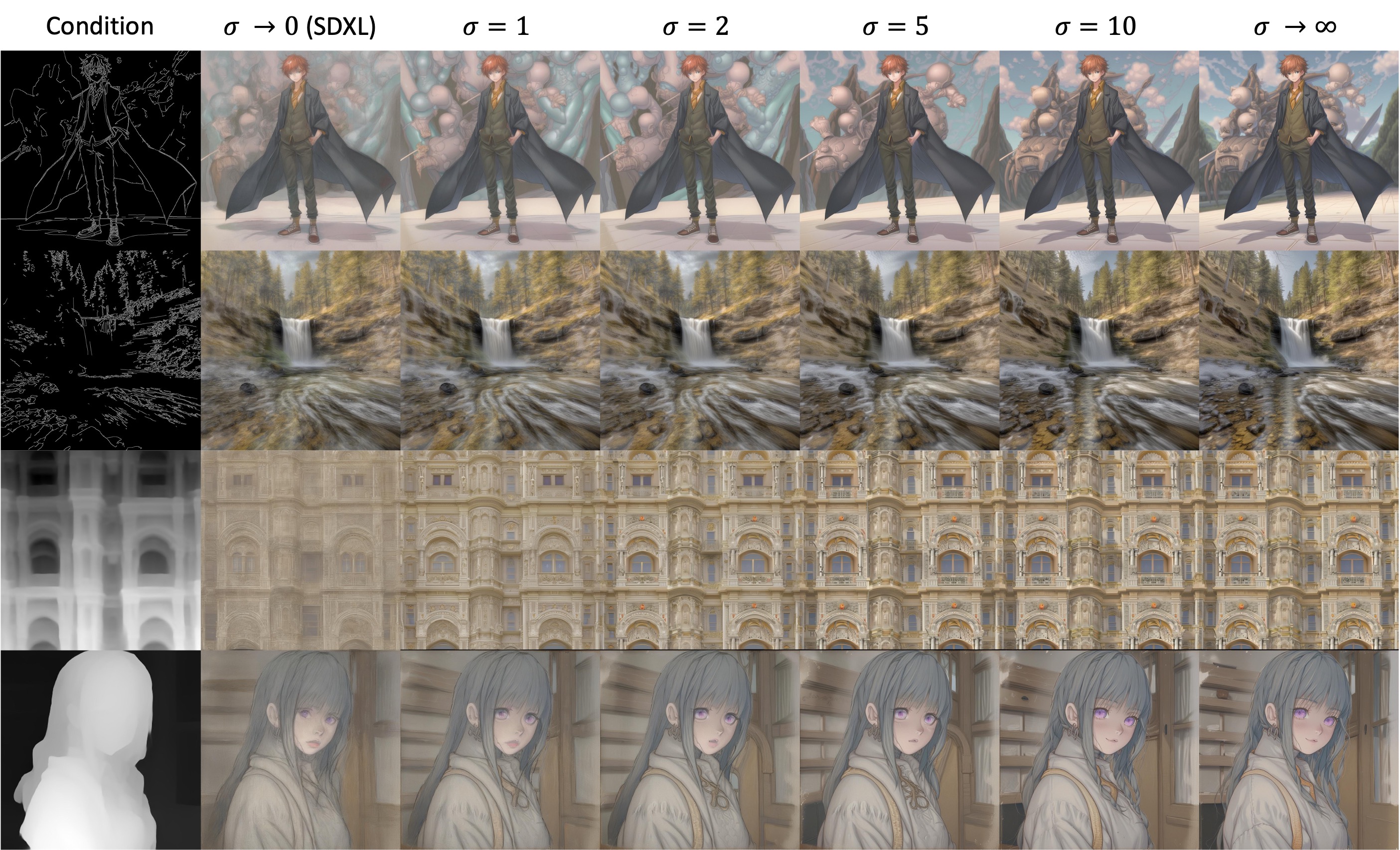}
\vspace{-20pt}
\caption{Conditional generation using ControlNet~\cite{zhang2023adding} and SEG.}
\label{fig:seg-ctrl}
\vspace{-10pt}
\end{figure}

\section{Discussion on related work}
\label{sec:existing-guidance}

Classifier-free guidance (CFG)~\cite{ho2022classifier}, first proposed as a replacement for classifier guidance (CG)~\cite{dhariwal2021diffusion} is controlled by a scale parameter. The higher we set classifier-free guidance, the more we get faithful, high-quality images. However, it requires external labels, such as text~\cite{nichol2021glide} or class~\cite{dhariwal2021diffusion} labels, making it impossible to apply to unconditional diffusion models. Also, it requires specific traning procedure with label dropping and it is known that high CFG causes saturation~\cite{saharia2022photorealistic}.

Tackling the caveats of CFG, unconditional approaches such as self-attention guidance (SAG)~\cite{hong2023improving} and perturbed attention guidance (PAG)~\cite{ahn2024self} have been proposed. SAG selectively blurs images with the mask obtained from the attention map and guides the generation process given the prediction. This indirect approach causes saturation and noisy images when given a large guidance scale, leading to the selection of a guidance scale less than or equal to $1$. PAG guides images using prediction with identity attention, where the attention map is an identity matrix. However, the reliance on heuristics to make perturbed predictions results in unintended side effects. As an example of the side effects of replacing the attention map with identity attention, PAG changes the visual structure and color distribution of an image, as evidenced in Figs.~\ref{fig:seg-comparison}, \ref{fig:appendix-comparison1}, and \ref{fig:appendix-comparison2}.

Contrary to these, we control the effect of SEG through the standard deviation of the Gaussian filter, $\sigma$. Moreover, while being theory-inspired, SEG is relatively free from unintended effects. In the following section, we corroborate our claim with extensive experiments.

\section{Experiments}

\subsection{Implementation details}

We build upon the current open-source state-of-the-art diffusion model, Stable Diffusion XL (SDXL)~\cite{podell2023sdxl}, as our baseline, and do not change the configuration. To sample with SEG, we choose the same attention layers (mid-blocks) and guidance scale as PAG~\cite{ahn2024self}. For SEG and PAG sampling, we use the Euler discrete scheduler~\cite{karras2022elucidating}, while for SAG~\cite{hong2023improving}, we instead use the DDIM scheduler~\cite{song2020denoising} since the current implementation of SAG does not support the Euler discrete sampler. For SAG and PAG, we use the same configurations they used in the experiments with the previous version of Stable Diffusion, with guidance scales of $1.0$ and $3.0$, respectively. We set $\gamma_\text{seg}$ to 3.0, except in the ablation study.

\begin{figure}[t]
\centering
\includegraphics[width=0.95\linewidth]{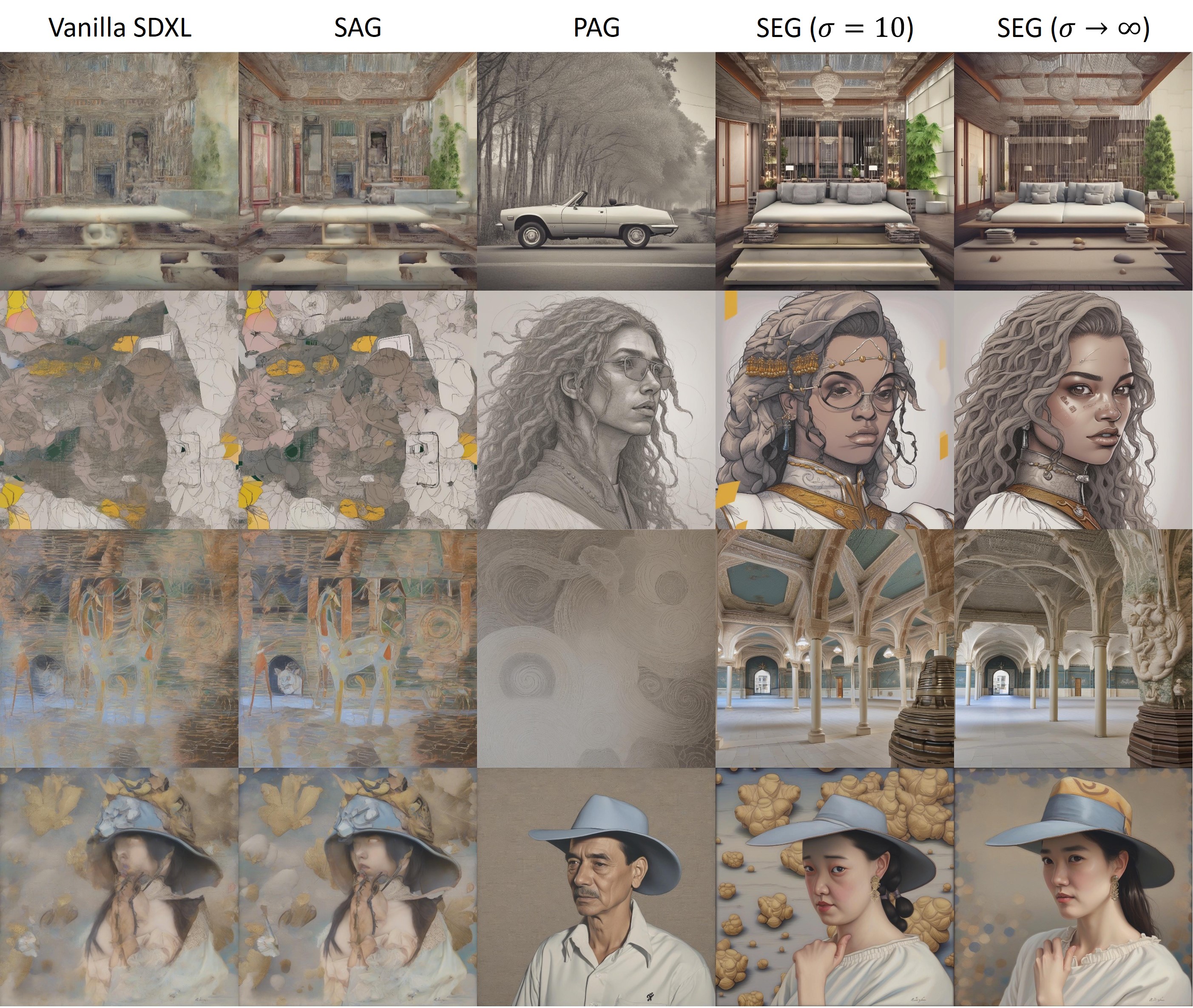}
\vspace{-10pt}
\caption{Qualitative comparison of SEG with vanilla SDXL~\cite{podell2023sdxl}, SAG~\cite{hong2023improving}, and PAG~\cite{ahn2024self}.}
\label{fig:seg-comparison}
\vspace{-5pt}
\end{figure}

\begin{table}[t]
\caption{Quantitative comparison of SEG with vanilla SDXL~\cite{podell2023sdxl}, SAG~\cite{hong2023improving}, and PAG~\cite{ahn2024self} for unconditional generation.}
\label{tab:comparison}
\centering
\small
\begin{tabular}{lccccccccc}
\toprule
{\multirow{2}{*}{Metric}} & {\multirow{2}{*}{Vanilla SDXL~\cite{podell2023sdxl}}} & {\multirow{2}{*}{SAG~\cite{hong2023improving}}} & {\multirow{2}{*}{PAG~\cite{ahn2024self}}} & SEG & SEG \\ &&&& $\sigma = 10$ & $\sigma \to \infty$ \\ \hline
FID$\downarrow$ & 129.496 & 106.683 & 105.271 & \underline{95.316} & \textbf{88.215} \\ \hline
$\textrm{LPIPS}_{\textrm{vgg}}\downarrow$ & - & 0.706 & 0.542 & \textbf{0.522} &\underline{ 0.536} \\
$\textrm{LPIPS}_{\textrm{alex}}\downarrow$ & - & 0.644 & 0.472 & \textbf{0.454} & \underline{0.472} \\
\bottomrule
\end{tabular}
\vspace{-5pt}
\end{table}

\begin{table}[t]
\caption{Text-conditional sampling with different $\sigma$.}
\label{tab:sigma}
\centering
\small
\begin{tabular}{lcccccc}
\toprule
{\multirow{2}{*}{Metric}} & {\multirow{2}{*}{Vanilla SDXL~\cite{podell2023sdxl}}} & \multicolumn{5}{c}{SEG} \\
\cline{3-7}
& & 1 & 2 & 5 & 10 & $\infty$ \\
\hline
FID$\downarrow$ & 53.423 & 48.284 & 41.784 & 33.819 & \underline{29.325} & \textbf{26.169} \\
CLIP Score$\uparrow$ & 0.271 & 0.273 & 0.278 & 0.285 & \underline{0.290} & \textbf{0.292} \\ \hline
$\textrm{LPIPS}_{\textrm{vgg}}\downarrow$ & - & \textbf{0.361} & \underline{0.410} & 0.449 & 0.472 & 0.493 \\
$\textrm{LPIPS}_{\textrm{alex}}\downarrow$ & - & \textbf{0.295} & \underline{0.347} & 0.390 & 0.416 & 0.440 \\
\bottomrule
\end{tabular}
\vspace{-10pt}
\end{table}

\subsection{Metrics}

We use various metrics to evaluate quality (FID~\cite{heusel2017gans} and CLIP score~\cite{radford2021learning}, calculated with 30k references from the MS-COCO 2014 validation set~\cite{lin2014microsoft}) and to assess the extent of change due to applied guidance ($\text{LPIPS}_\text{vgg, alex}$~\cite{zhang2018unreasonable}). The latter metric, calculated using the outputs of vanilla SDXL, measures the extent of side effects by comparing guided images to their unguided counterparts.

\subsection{Controlling image generation with the standard deviation}

In this section, our aim is to demonstrate that with SEG, we can sample plausible images using vanilla SDXL~\cite{podell2023sdxl} under various conditions and even without any conditions, as demonstrated in Fig.~\ref{fig:teaser}. Furthermore, without the risk of saturation, we can control the quality and plausibility of the samples. For the results, we use \(\sigma \in \{1, 2, 5, 10\}\). Additionally, as mentioned in Sec.~\ref{sec:seg}, we present two extreme cases, \(\sigma \to 0\) (vanilla SDXL) and \(\sigma \to \infty\) (uniform queries).

\paragraph{Unconditional generation}

In this section, our aim is to demonstrate that with SEG, we can sample plausible images from the unconditional mode of the vanilla SDXL, which was originally trained on a large-scale text-to-image dataset. The results are presented in Fig.~\ref{fig:teaser}, Fig.~\ref{fig:seg-uncond}, and Table~\ref{tab:comparison}. The results show a clear tendency to draw higher quality samples by utilizing the differences between the two energy landscapes with different curvatures derived from self-attention mechanisms.

In Fig.~\ref{fig:seg-uncond} and Fig.~\ref{fig:appendix-uncond}, we show the effectiveness of generating more plausible images, while vanilla SDXL is unable to generate high-quality images without any conditions. The results show a clear tendency to draw higher quality samples by utilizing the differences between the two energy landscapes with different curvatures derived from self-attention mechanisms. When $\sigma$ is larger, the definition and expression of the samples improve, as the difference in curvature becomes more pronounced.

\paragraph{Conditional generation}

In Figs.~\ref{fig:seg-cond}, \ref{fig:seg-ctrl}, \ref{fig:appendix-ctrl1}, \ref{fig:appendix-ctrl2}, and \ref{fig:appendix-cond}, we display sampling results conditioned on text, Canny, and depth map. Using text (Fig.~\ref{fig:seg-cond}), the vanilla SDXL without CFG is unable to generate high-quality images and produces noisy results. Canny and depth map conditioning on SDXL (Fig.~\ref{fig:seg-ctrl}, \ref{fig:appendix-ctrl1}, and \ref{fig:appendix-ctrl2}) is achieved through ControlNet~\cite{zhang2023adding}, trained on such maps. The results show that SEG enhances the quality and fidelity of the generated images while preserving the textual and structural information provided by the conditioning inputs. Notably, as $\sigma$ increases, the generated images exhibit improved definition and quality without introducing significant artifacts or deviations from the original condition. The combination with higher CFG scales is shown in Figs.~\ref{fig:appendix-cfg1}--\ref{fig:appendix-cfg5}.

In Table~\ref{tab:sigma}, we show the quantitative results for text-conditional generation in terms of $\sigma$. We observe a clear trade-off between image quality (represented by FID and CLIP score) and the deviation from the original sample (represented by LPIPS). We sample 30k images for each $\sigma$ to compute the metrics.

\subsection{Comparison with previous methods}
Since the results are visually favorable when we use $\sigma = 10$ and $\sigma \to \infty$, and they are the best in terms of CLIP score and FID, respectively, we adopt those configurations for comparison of unconditional guidance methods. The results are presented in Figs.~\ref{fig:seg-comparison}, \ref{fig:appendix-comparison1}, \ref{fig:appendix-comparison2}, and Table~\ref{tab:comparison}. Notably, our method achieves better image quality in terms of FID, while remaining similar to the original output of vanilla SDXL as measured by LPIPS, implying a Pareto improvement.

\begin{wrapfigure}{r}{6.5cm}
\vspace{-30pt}
\centering
\includegraphics[width=1.0\linewidth]{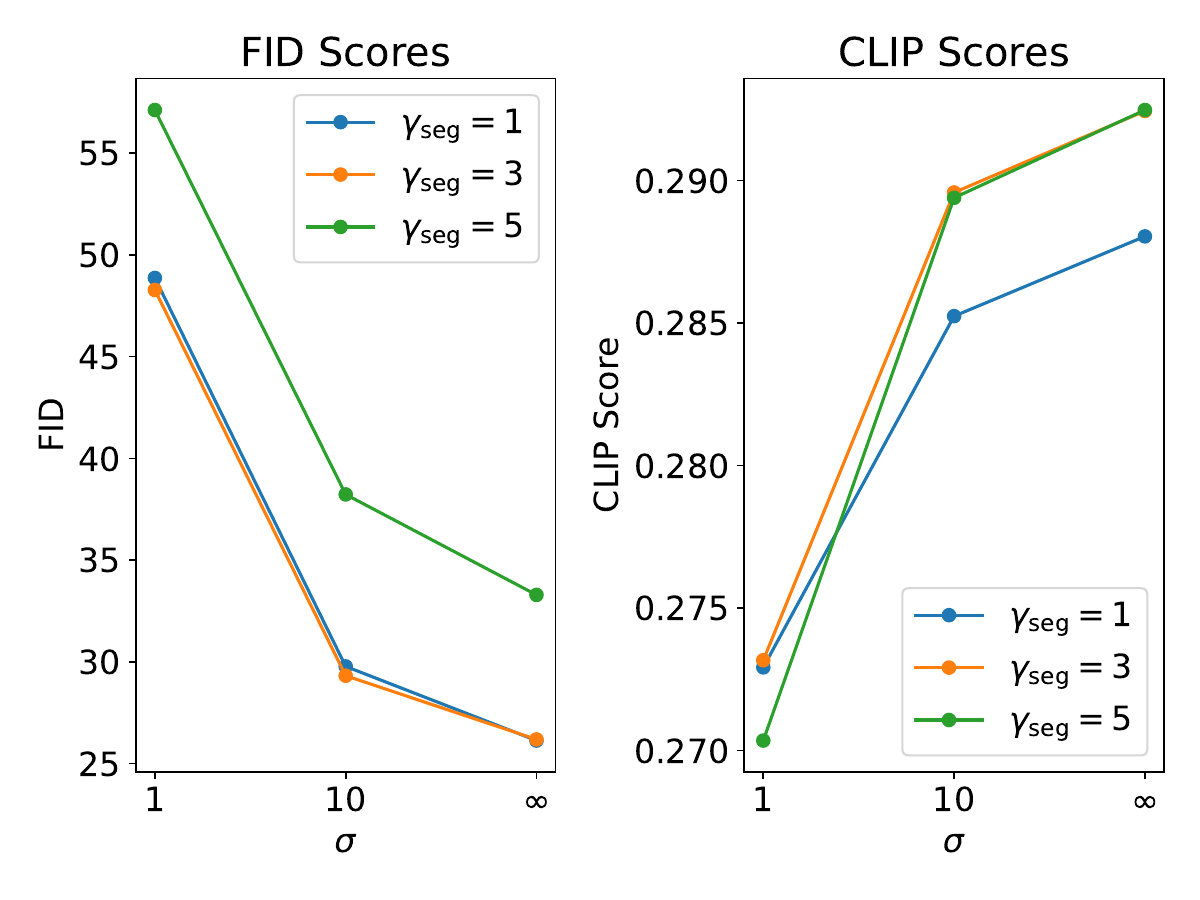}
\vspace{-25pt}
\caption{Ablation study on $\gamma_\text{seg}$ and $\sigma$.}
\label{fig:abl}
\vspace{-30pt}
\end{wrapfigure}

\subsection{Ablation study}
\label{sec:abl}

In this section, we address two parameters, $\gamma_\text{seg}$ and $\sigma$, and justify that fixing $\gamma_\text{seg}$ is a reasonable choice. In Fig.~\ref{fig:abl}, we present the results from our testing. The results reveal that increasing $\gamma_\text{seg}$ does not generally lead to improved sample quality in terms of FID and CLIP score, due to various issues such as saturation. In contrast, increasing $\sigma$ tends to improve sample quality and plausibility. This supports the claim that image quality should be controlled by $\sigma$, instead of the guidance scale parameter. We sample 30k images for each combination to calculate the metrics.

\section{Conclusion, limitations and societal impacts}

\paragraph{Conclusion}

We introduce Smoothed Energy Guidance (SEG), a novel training- and condition-free guidance method for image generation with diffusion models. The key advantages of SEG lie in its flexibility and the theoretical foundation, allowing us to significantly enhance sample quality without side effects by adjusting the standard deviation of the Gaussian filter. We hope our method inspires further research on improving generative models, and extending the approach beyond image generation, for example, to video or natural language processing.

\paragraph{Limitations and societal impacts}

The paper proposes guidance to enhance quality outcomes. Consequently, the attainable quality of our approach is contingent upon the baseline model employed. Furthermore, the application of SEG to temporal attention mechanisms in video or multi-view diffusion models is not addressed, remaining a promising avenue for future research. It is important to note that the improvements achieved through this method may potentially lead to unintended negative societal consequences by inadvertently amplifying existing stereotypes or harmful biases.

\section*{Acknowledgements}

I would like to express my gratitude to Yong-Hyun Park, Junha Hyung, and Donghoon Ahn for their valuable feedback and insights. Their thoughtful comments and suggestions have been instrumental in improving this work.

\medskip
{\small
\bibliographystyle{plain}
\bibliography{main}
}


\newpage

\appendix

\section{Full proofs}

\subsection{Proof of Lemma \ref{prop:mean-variance}}
\label{appendix:mean-variance}

Let $a_{(i, j)}$ denote the original attention weights and $\tilde{a}_{(i, j)}$ denote the blurred attention weights, as in the main paper. Assume that the original attention weights are properly padded to maintain consistent statistics. Then, the following shows that the mean of the blurred attention weights remains the same.
\begin{align*}
\mathbb{E}_{i,j}[\tilde{a}_{(i,j)}] &= \frac{1}{HW} \sum_{i=1}^{H} \sum_{j=1}^{W} \tilde{a}_{(i,j)} = \frac{1}{HW} \sum_{i=1}^{H} \sum_{j=1}^{W} \sum_{m=-k}^{k} \sum_{n=-k}^{k} G(m, n) \cdot a_{(i+m,j+n)} \\
&= \sum_{m=-k}^{k} \sum_{n=-k}^{k} G(m, n) \cdot \left(\frac{1}{HW} \sum_{i=1}^{H} \sum_{j=1}^{W} a_{(i+m, j+n)}\right) \\
&= \sum_{m=-k}^{k} \sum_{n=-k}^{k} G(m, n) \cdot \mathbb{E}_{i,j}[{a}_{(i,j)}] = \mathbb{E}_{i,j}[{a}_{(i,j)}] \cdot \sum_{m=-k}^{k} \sum_{n=-k}^{k} G(m, n) = \mathbb{E}_{i,j}[{a}_{(i,j)}]
\end{align*}

In addition, the variance of the blurred attention weights is smaller than or equal to the variance of the original attention weights.
\begin{align*}
\text{Var}_{i,j}[\tilde{a}_{(i,j)}] &= \frac{1}{HW} \sum_{i=1}^{H} \sum_{j=1}^{W} (\tilde{a}_{(i,j)} - \mathbb{E}_{i,j}[\tilde{a}_{(i,j)}])^2 \\
&= \frac{1}{HW} \sum_{i=1}^{H} \sum_{j=1}^{W} \left(\sum_{m=-k}^{k} \sum_{n=-k}^{k} G(m, n) \cdot (a_{(i+m, j+n)} - \mathbb{E}_{i,j}[a_{(i,j)}])\right)^2 \\
&= \sum_{m=-k}^{k} \sum_{n=-k}^{k} \sum_{r=-k}^{k} \sum_{s=-k}^{k} G(m, n) \cdot G(r, s) \cdot \text{Cov}[a_{(i+m, j+n)}, a_{(i+r, j+s)}]
\end{align*}
Using the Cauchy-Schwarz inequality and the normalization property of the 2D Gaussian filter, we can show that the variance monotonically decreases when we apply Gaussian blur.
\begin{align*}
\text{Var}_{i,j}[\tilde{a}_{(i,j)}] &\leq \sum_{m=-k}^{k} \sum_{n=-k}^{k} \sum_{r=-k}^{k} \sum_{s=-k}^{k} G(m, n) \cdot G(r, s) \cdot \sqrt{\text{Var}[a_{(i+m, j+n)}] \cdot \text{Var}[a_{(i+r, j+s)}]} \\
&= \left(\sum_{m=-k}^{k} \sum_{n=-k}^{k} G(m, n) \cdot \sqrt{\text{Var}[a_{(i+m, j+n)}]}\right)^2 \\
&\leq \left(\sum_{m=-k}^{k} \sum_{n=-k}^{k} G(m, n)\right) \cdot \left(\sum_{m=-k}^{k} \sum_{n=-k}^{k} G(m, n) \cdot \text{Var}[a_{(i+m, j+n)}]\right) \\
&= \sum_{m=-k}^{k} \sum_{n=-k}^{k} G(m, n) \cdot \text{Var}[a_{(i+m, j+n)}] \\
&= \text{Var}_{i,j}[{a}_{(i,j)}]
\end{align*}
\qed

\subsection{Proof of Lemma \ref{prop:lse}}
\label{appendix:lse}

Applying the second-order Taylor series approximation of $e^x$ to our function $f$ around the mean $\mu$, we get:
\begin{align}
\sum_{i=1}^{H} \sum_{j=1}^{W} e^{a_{(i,j)}} &\approx \sum_{i=1}^{H} \sum_{j=1}^{W} \left(e^{\mu} + e^{\mu}(a_{(i,j)} - \mu) + \frac{1}{2}e^{\mu}(a_{(i,j)} - \mu)^2\right) \\
&= HW \cdot e^{\mu} + \frac{1}{2}e^{\mu} \sum_{i=1}^{H} \sum_{j=1}^{W} (a_{(i,j)} - \mu)^2
\end{align}
In the last step, we used the fact that $\sum_{i=1}^{H} \sum_{j=1}^{W} (a_{(i,j)} - \mu) = 0$ because $\mu$ is the mean.

Similarly,
\begin{align}
\sum_{i=1}^{H} \sum_{j=1}^{W} e^{\tilde{a}_{(i,j)}} &\approx HW \cdot e^{\mu} + \frac{1}{2}e^{\mu} \sum_{i=1}^{H} \sum_{j=1}^{W} (\tilde{a}_{(i,j)} - \mu)^2
\end{align}
Since $\text{Var}[a] > \text{Var}[\tilde{a}]$, we have:
\begin{equation}
\sum_{i=1}^{H} \sum_{j=1}^{W} (a_{(i,j)} - \mu)^2 \geq \sum_{i=1}^{H} \sum_{j=1}^{W} (\tilde{a}_{(i,j)} - \mu)^2
\end{equation}
Therefore, the second-order approximation of $\mathrm{lse}({\mathbf{a}})$ is larger than that of $\mathrm{lse}(\tilde{\mathbf{a}})$.

Note that this fact also implies blurring with a Gaussian filter with a bigger variance causes more decrease in the variance of attention weights, because Gaussian filter with a larger variance can always be represented as a convolution of two filters with smaller variances, and the convolution operation is associative.

To find the maximum value subject to the constraint $a_{(1,1)} + a_{(1,2)} + \ldots + a_{(H,W)} = c$ for some constant $c$, we introduce Lagrange multipliers. Let $g(a_{(1,1)}, a_{(1,2)}, \ldots, a_{(H,W)}) = a_{(1,1)} + a_{(1,2)} + \ldots + a_{(H,W)}$. The Lagrangian function is defined as:
\begin{equation}
L(a_{(1,1)}, a_{(1,2)}, \ldots, a_{(H,W)}, \lambda) = e^{a_{(1,1)}} + e^{a_{(1,2)}} + \ldots + e^{a_{(H,W)}} - \lambda(a_{(1,1)} + a_{(1,2)} + \ldots + a_{(H,W)} - c)
\end{equation}
Taking partial derivatives and setting them to zero yields:
\begin{equation}
\frac{\partial L}{\partial a_{(i,j)}} = e^{a_{(i,j)}} - \lambda = 0
\end{equation}
Solving for $a_{(i,j)}$, we obtain $a_{(i,j)} = \ln(\lambda)$ for all $i = 1, 2, \ldots, H$ and $j = 1, 2, \ldots, W$
Summing these equations results in:
\begin{align}
\lambda = e^{\frac{c}{HW}}
\end{align}
Substituting $\lambda$ back into $a_{(i,j)} = \ln(\lambda)$ gives $a_{(1,1)} = a_{(1,2)} = \ldots = a_{(H,W)} = \frac{c}{HW}$.
Therefore, the minimum value of $\sum_{i=1}^{H} \sum_{j=1}^{W} e^{a_{(i,j)}}$ is achieved when $a_{(1,1)} = a_{(1,2)} = \ldots = a_{(H,W)}$.
\qed

\subsection{Proof of Theorem \ref{prop:curvature}}
\label{appendix:curvature}

Let $\mathbf{a} = (a_1, \ldots, a_n)$ denote the attention values before the softmax operation, and let $\tilde{\mathbf{a}} = (\tilde{a}_1, \ldots, \tilde{a}_n)$ denote the attention values after applying the 2D Gaussian blur. Let $\mathbf{H}$ denote the Hessian of the original energy, \textit{i.e.}, the derivative of the negative softmax, and $\tilde{\mathbf{H}}$ denote the Hessian of the underlying energy associated with the blurred weights.

The elements in the $i$-th row and $j$-th column of the Hessian matrices are given by:
\begin{equation}
h_{ij} = (\xi(\mathbf{a})_i - \delta_{ij})\xi(\mathbf{a})_j,
\end{equation}
\begin{equation}
\tilde{h}_{ij} = (\xi(\tilde{\mathbf{a}})_i - \delta_{ij})\xi(\tilde{\mathbf{a}})_j b_{ij},
\end{equation}
respectively, where $b_{ij}$ are the elements of the Toeplitz matrix corresponding to the Gaussian blur kernel, and $\delta_{ij}$ denotes the Kronecker delta.

Assuming $\xi(\tilde{\mathbf{a}})_i \xi(\tilde{\mathbf{a}})_j \approx 0$ and $\xi({\mathbf{a}})_i \xi({\mathbf{a}})_j \approx 0$ for all $i$ and $j$, which is a reasonable assumption when the number of token is large and the softmax values get small, the non-diagonal elements of the Hessians approximate to $0$ and the diagonal elements dominate. Therefore, the determinants of the Hessian matrices are approximated as the product of the dominant terms:
\begin{equation}
|\det(\mathbf{H})| \approx \prod_{i=1}^n \xi(\mathbf{a})_i, \quad |\det(\tilde{\mathbf{H}})| \approx \prod_{i=1}^n \xi(\tilde{\mathbf{a}})_i b_{ii}
\label{eq:appendix-detH}
\end{equation}

We have the following inequality:
\begin{align}
\prod_{i=1}^n \xi(\tilde{\mathbf{a}})_i b_{ii} < \prod_{i=1}^n \xi(\tilde{\mathbf{a}})_i &= \frac{e^{\sum_{j=1}^{n} \tilde{a}_j}}{(\sum_{j=1}^{n} e^{\tilde{a}_j})^n} \\ &\leq \frac{e^{\sum_{j=1}^{n} {a}_j}}{(\sum_{j=1}^{n} e^{{a}_j})^n} = \prod_{i=1}^n \xi(\mathbf{a})_i,
\label{eq:appendix-inequality}
\end{align}
where the first inequality follows from the property of the Gaussian blur kernel, $0 \leq b_{ii} < 1$, and the second inequality is derived from Lemmas \ref{prop:mean-variance} and \ref{prop:lse}, which demonstrate the mean-preserving property and the decrease in the $\mathrm{lse}$ value when applying a blur. The monotonicity of the logarithm function implies that the denominator involving the blurred attention weights is smaller. Eventually, we obtain the following inequality:
\begin{align}
|\det(\tilde{\mathbf{H}})| < |\det(\mathbf{H})|.
\end{align}
This implies that the updated value is derived with attenuated curvature of the energy function underlying the blurred softmax operation compared to that of the original softmax operation.
\qed

\section{Dual definition}
\label{sec:dual}

As we previously stated in Section~\ref{sec:ebm}, we have the dual definition regarding \eqref{eq:energy-attn}, where we use swapped indexing. Importantly, the swapped indices can be interpreted as altering the definition of attention weights to $\mathbf{A}:=\mathbf{K}\mathbf{Q}^\top$.

A similar conclusion can be drawn as in the main paper, except that query blurring becomes key blurring with this definition. To see this, Eq. \ref{eq:toeplitz} changes slightly with this definition, using the symmetry of the Toeplitz matrix $\mathbf{B}$:
\begin{align}
G \ast (\mathbf{KQ}^{\top}) &= \mathbf{B}(\mathbf{KQ}^{\top}) \\&= ((\mathbf{KQ}^{\top})^\top\mathbf{B}^\top)^\top \\&= (\mathbf{Q}\mathbf{K}^\top\mathbf{B}^\top)^\top \\&= (\mathbf{Q}\mathbf{(BK)}^\top)^\top \\&= (\mathbf{Q}(G \ast \mathbf{K})^{\top})^\top \\&= (G \ast \mathbf{K})\mathbf{Q}^\top,
\end{align}
where $\ast$ denotes the 2D convolution operation. Empirically, this altered definition does not introduce a significant difference in the overall image quality, as shown in Fig.~\ref{fig:appendix-key}.

\section{Additional qualitative results}

In this section, we present further qualitative results to demonstrate the effectiveness and versatility of our Smoothed Energy Guidance (SEG) method across various generation tasks and in comparison with other approaches.

\paragraph{Comparison with previous methods}

Figs.~\ref{fig:appendix-comparison1} and~\ref{fig:appendix-comparison2} provide a qualitative comparison of SEG against vanilla SDXL \cite{podell2023sdxl}, Self-Attention Guidance (SAG)~\cite{hong2023improving}, and Perturbed Attention Guidance (PAG)~\cite{ahn2024self}. These comparisons highlight the superior performance of SEG in terms of image quality, coherence, and adherence to the given prompts. SEG consistently produces sharper details, more realistic textures, and better overall composition compared to the other methods.

\paragraph{Conditional generation with ControlNet}

Figs.~\ref{fig:appendix-ctrl1} and~\ref{fig:appendix-ctrl2} showcase the application of SEG in conjunction with ControlNet~\cite{zhang2023adding} for conditional image generation. These results illustrate how SEG can enhance the quality and coherence of generated images while maintaining fidelity to the provided control signals. The images demonstrate improved detail, texture, and overall visual appeal compared to standard ControlNet outputs without prompts.

\paragraph{Unconditional and text-conditional generation}

Fig.~\ref{fig:appendix-uncond} demonstrates the capability of SEG in unconditional image generation, showcasing its ability to produce high-quality, diverse images without text prompts. Fig.~\ref{fig:appendix-cond} exhibits text-conditional generation results using SEG, illustrating its effectiveness in translating textual descriptions into visually appealing and accurate images.

\paragraph{Interaction with classifier-free guidance}

Figs.~\ref{fig:appendix-cfg1}--\ref{fig:appendix-cfg5} present a series of experiments exploring the combination of SEG with CFG. In these experiments, the SEG guidance scale ($\gamma_\text{seg}$) is fixed at 3.0, while the CFG scale is varied. The results demonstrate that SEG consistently improves image quality across different CFG scales without causing saturation or significant changes in the general structure of the images.

\paragraph{Ablation study}

Fig.~\ref{fig:appendix-abl-1} displays a visual example of unconditional generation with controlled $\gamma_\mathrm{seg}$ and $\sigma$. Consistent with results in Sec.~\ref{sec:abl}, controlling image quality with $\sigma$ has fewer side effects than controlling with $\gamma_\mathrm{seg}$.

\section{Pipeline figure}

The overall pipeline and conceptual framework of SEG are presented in Fig.~\ref{fig:pipeline-seg}. Fig.~\ref{fig:pipeline-seg} (a) and Fig.~\ref{fig:pipeline-seg} (b) depict the original sampling process and the modified sampling process with smoothed energy, respectively. Fig.~\ref{fig:pipeline-seg} (c) illustrates the the final prediction (the red arrow) with the guidance scale.

\begin{figure}[t]
\centering
\includegraphics[width=1.0\linewidth]{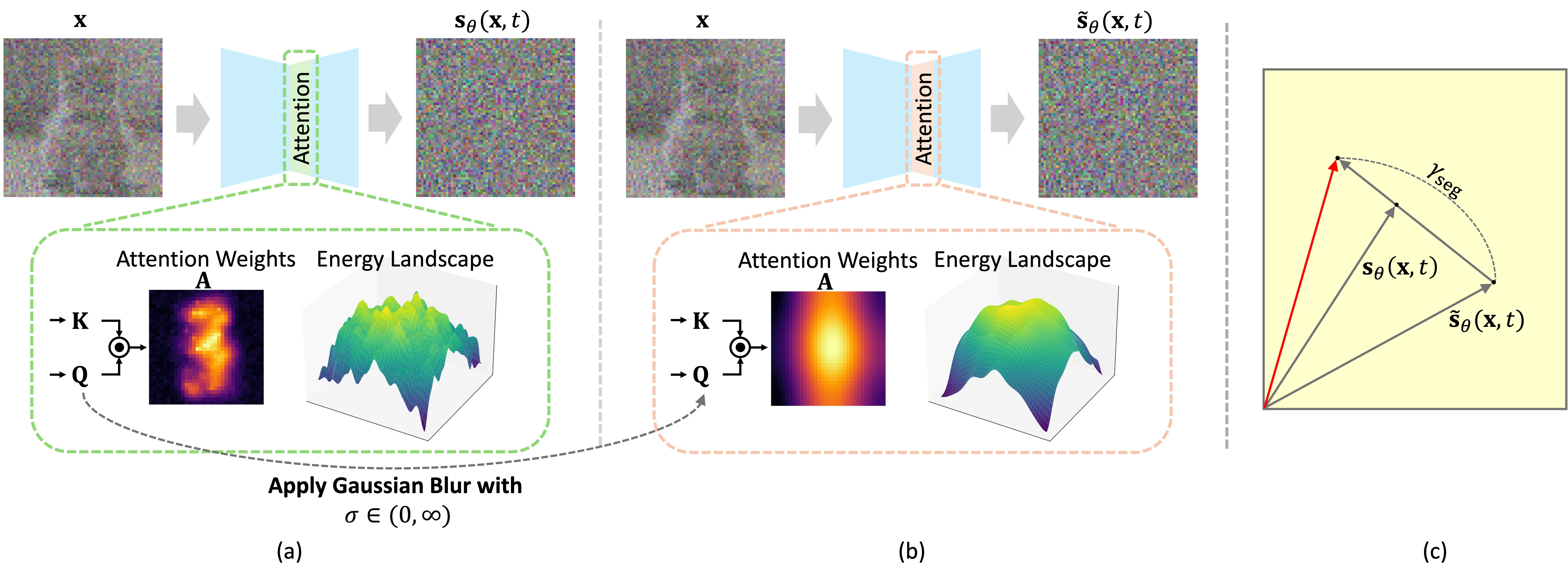}
\vspace{-15pt}
\caption{Pipeline of SEG. (a) Original sampling process, self-attention weights, and the corresponding energy landscape. (b) Our modified sampling process with blurred queries where $\sigma \in (0,\infty)$, inducing blurred attention weights and the corresponding smoothed energy landscape. (c) A conceptual figure of $\gamma_\mathrm{seg}$. Note that since the guidance linearly extrapolates predictions from (a) and (b), a high guidance scale causes samples to be out of the manifold.}
\label{fig:pipeline-seg}
\end{figure}

\begin{figure}[t]
\centering
\includegraphics[width=1.0\linewidth]{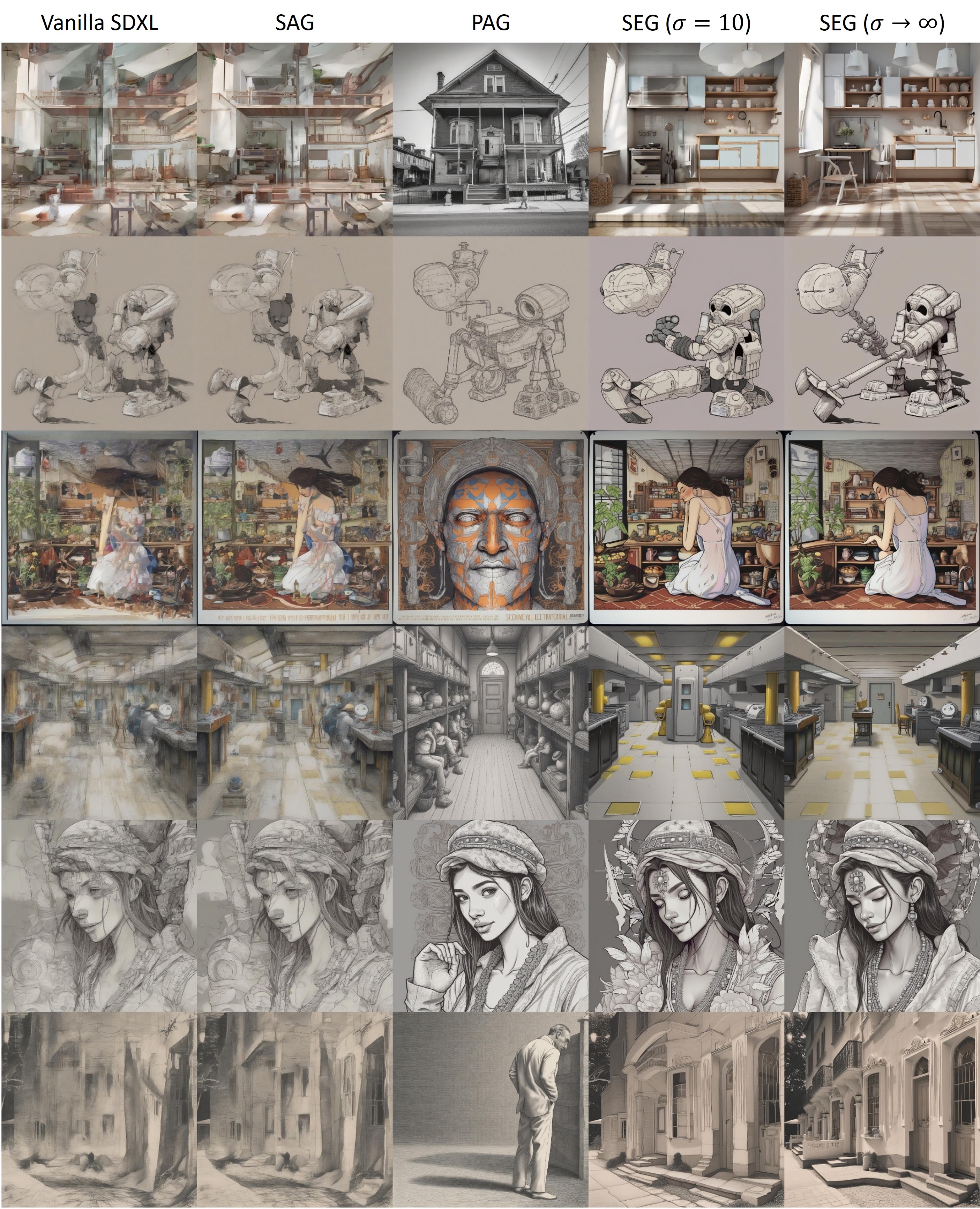}
\vspace{-15pt}
\caption{Qualitative comparison of SEG with vanilla SDXL~\cite{podell2023sdxl}, SAG~\cite{hong2023improving}, and PAG~\cite{ahn2024self}.}
\label{fig:appendix-comparison1}
\end{figure}

\begin{figure}[t]
\centering
\includegraphics[width=1.0\linewidth]{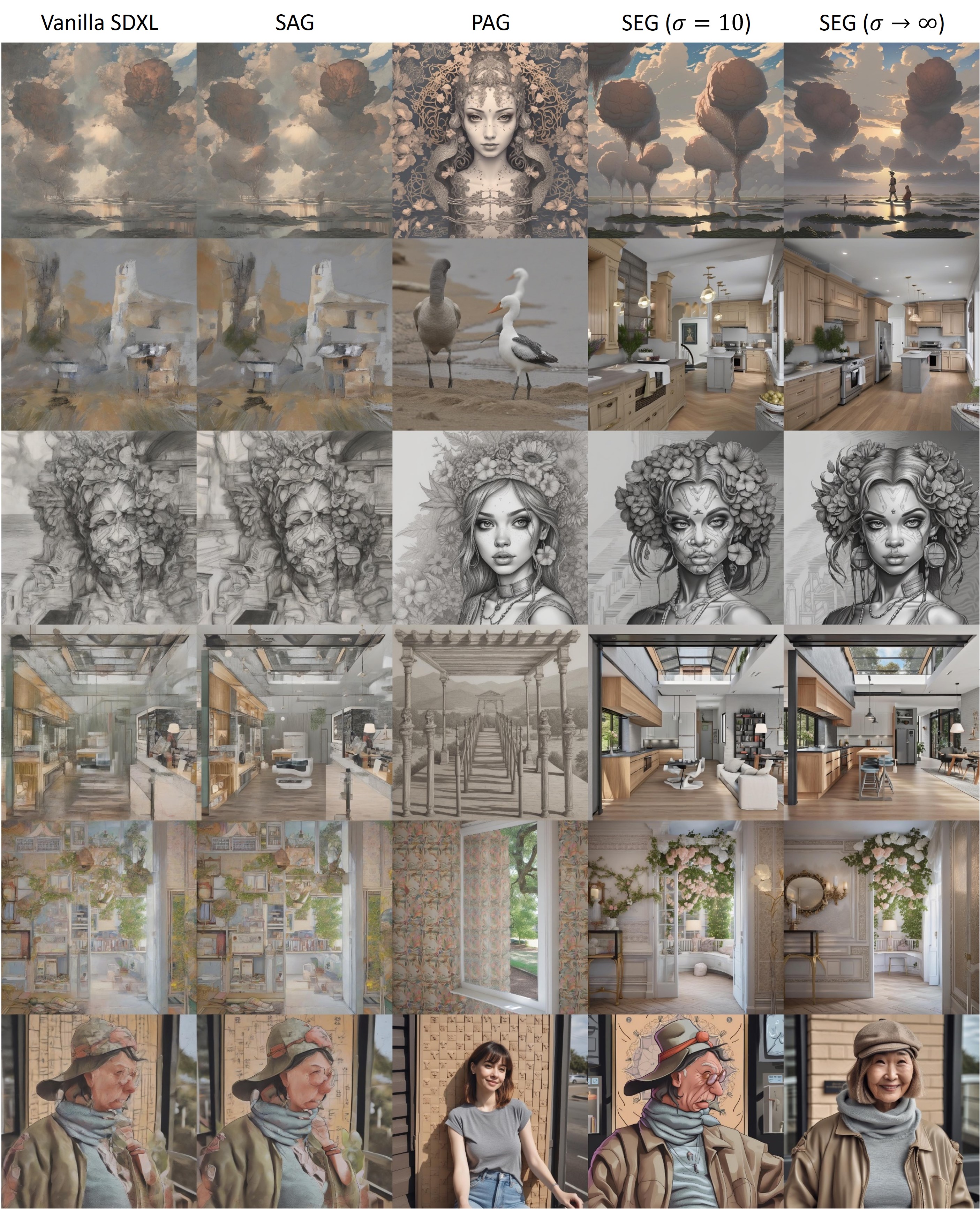}
\vspace{-15pt}
\caption{Qualitative comparison of SEG with vanilla SDXL~\cite{podell2023sdxl}, SAG~\cite{hong2023improving}, and PAG~\cite{ahn2024self}.}
\label{fig:appendix-comparison2}
\end{figure}

\clearpage
\newpage

\begin{figure}[t]
\centering
\includegraphics[width=1.0\linewidth]{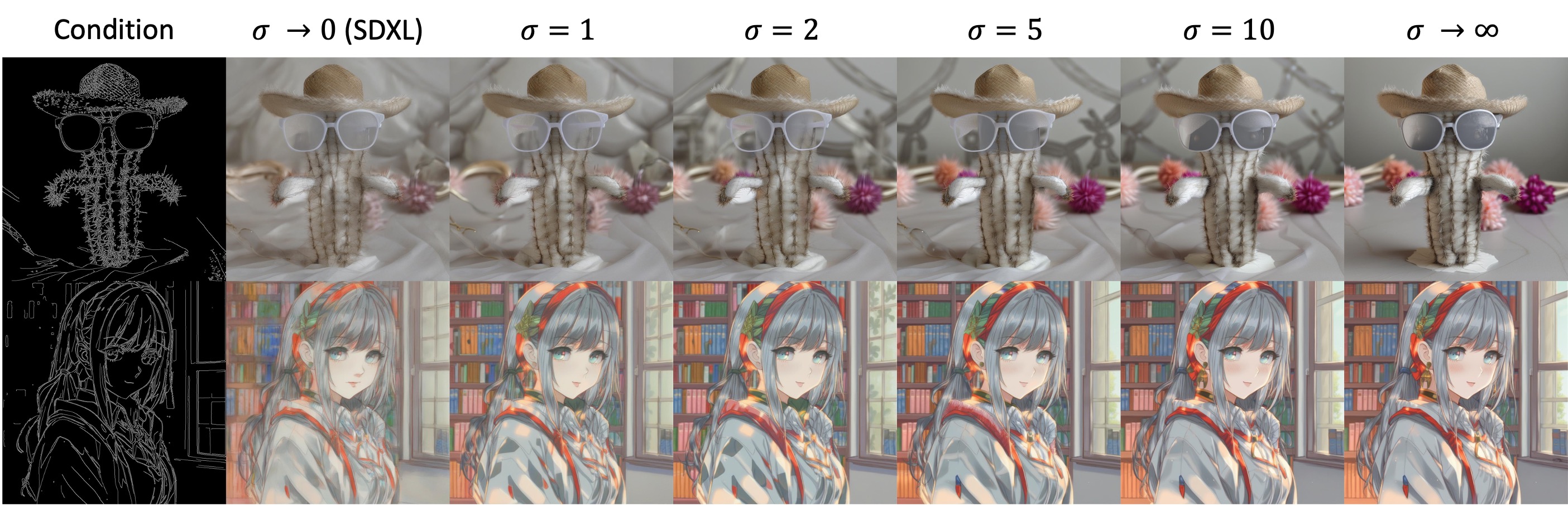}
\vspace{-15pt}
\caption{Conditional generation using ControlNet~\cite{zhang2023adding} and SEG.}
\label{fig:appendix-ctrl1}
\end{figure}

\begin{figure}[t]
\centering
\includegraphics[width=1.0\linewidth]{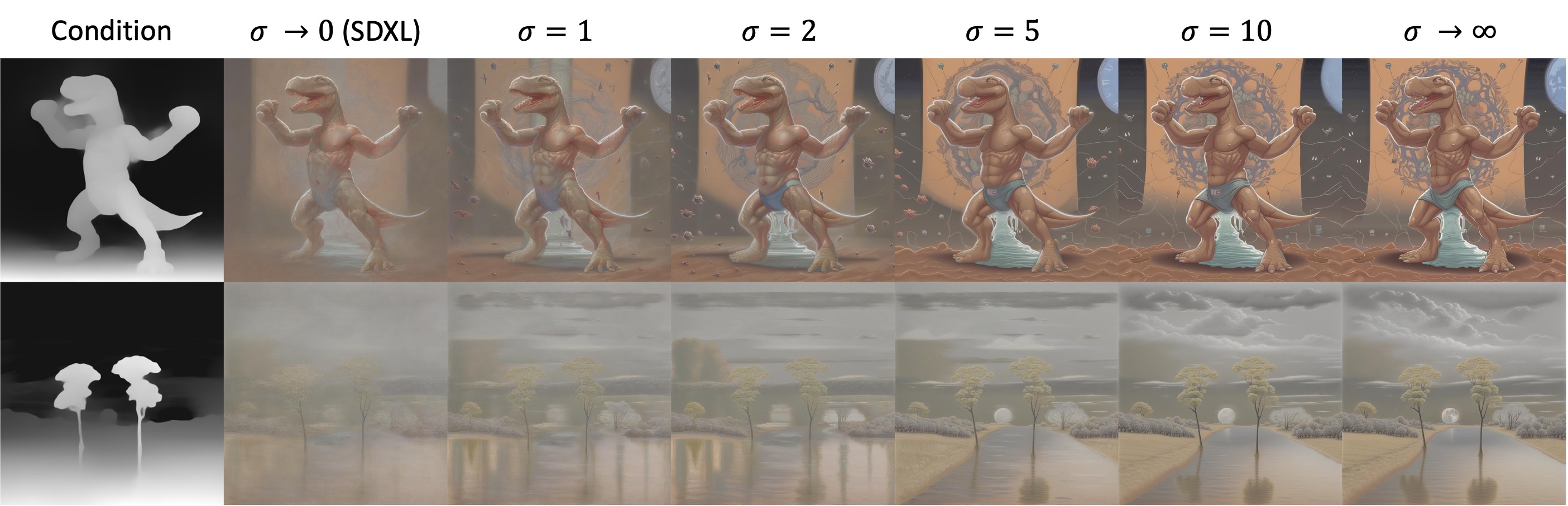}
\vspace{-15pt}
\caption{Conditional generation using ControlNet~\cite{zhang2023adding} and SEG.}
\label{fig:appendix-ctrl2}
\end{figure}

\begin{figure}[t]
\centering
\includegraphics[width=1.0\linewidth]{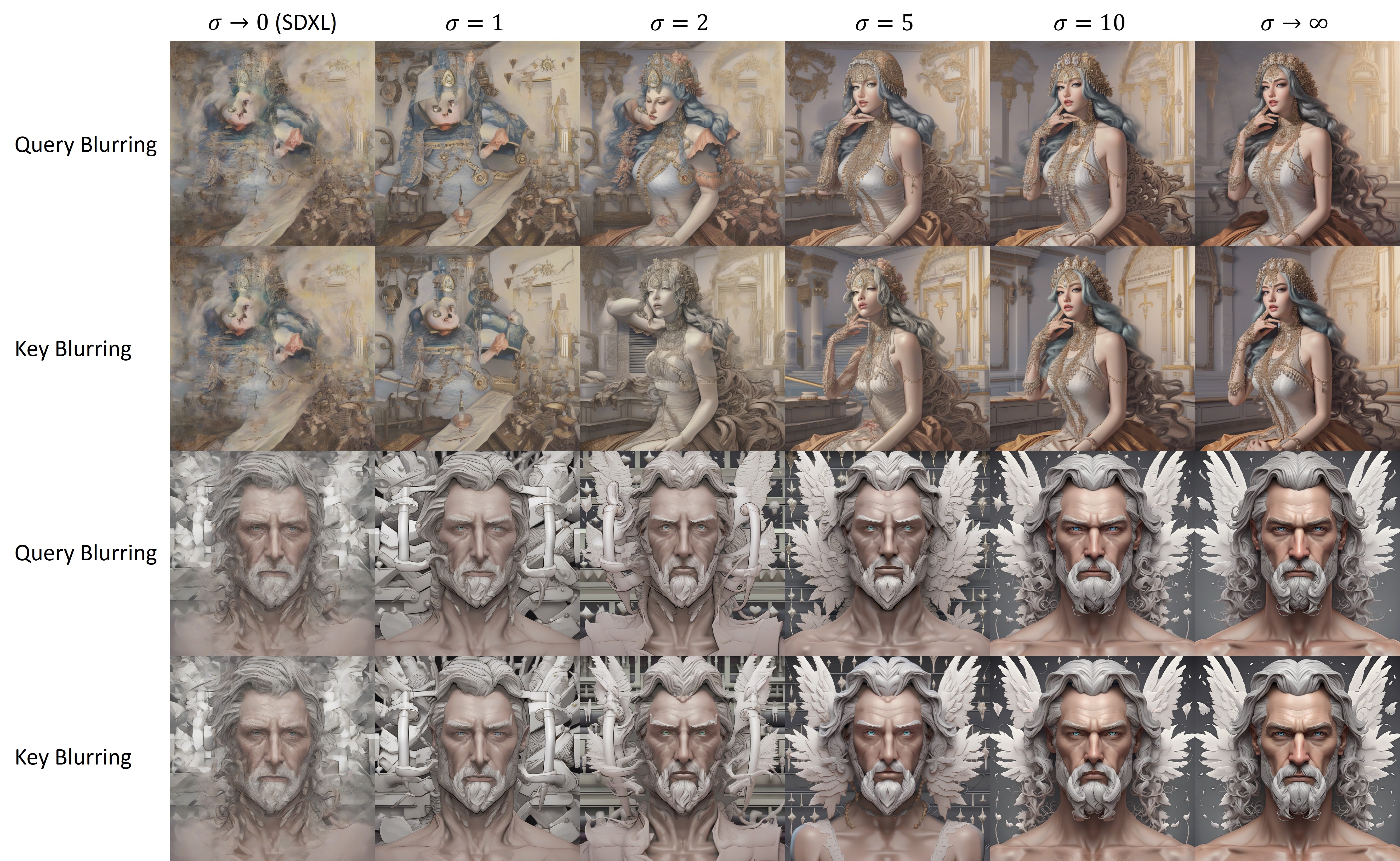}
\vspace{-15pt}
\caption{Comparison between query and key blur across different values of $\sigma$.}
\label{fig:appendix-key}
\end{figure}

\clearpage
\newpage

\begin{figure}[t]
\centering
\includegraphics[width=1.0\linewidth]{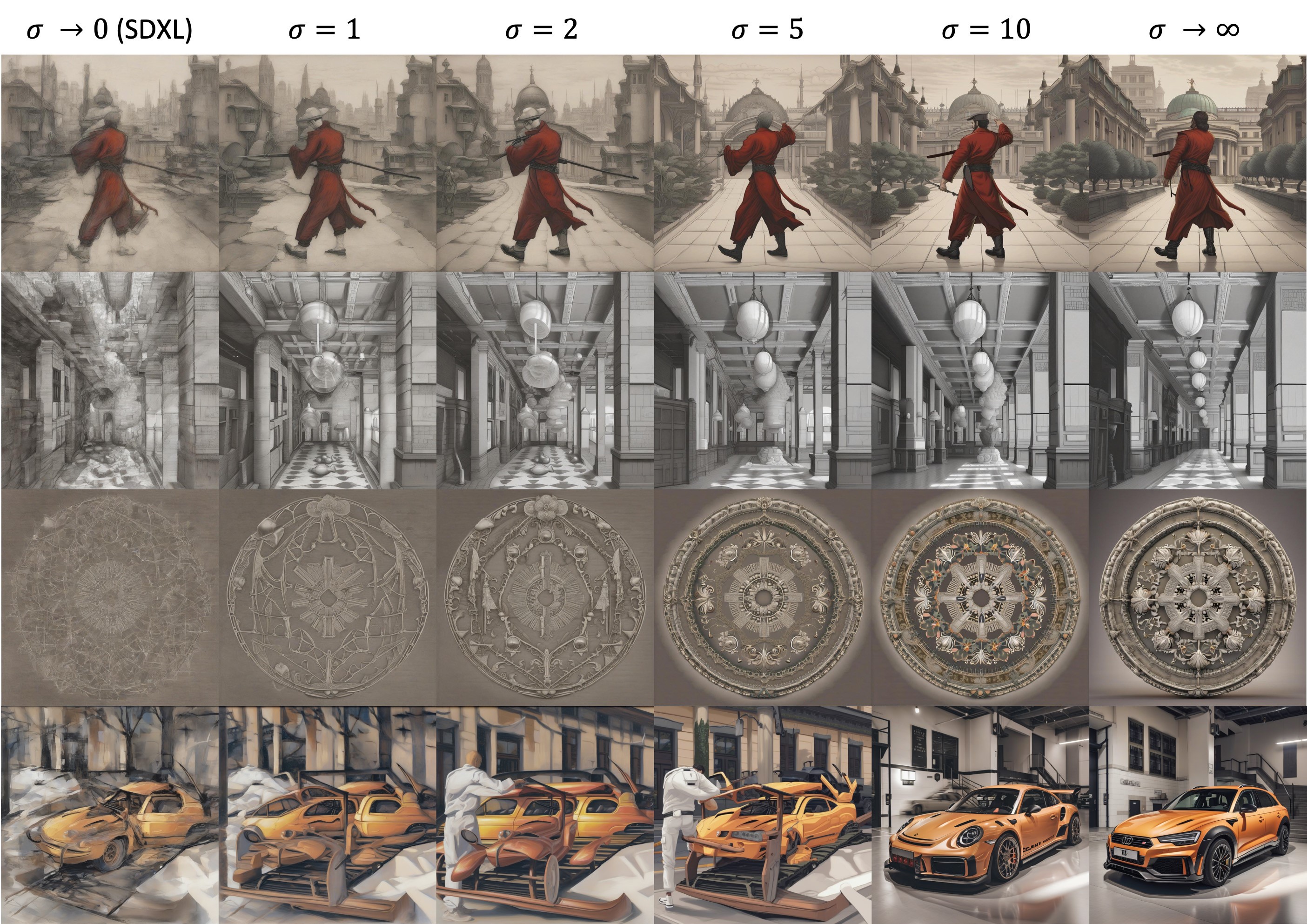}
\vspace{-15pt}
\caption{Unconditional generation using SEG.}
\label{fig:appendix-uncond}
\end{figure}

\clearpage
\newpage

\begin{figure}[t]
\centering
\includegraphics[width=1.0\linewidth]{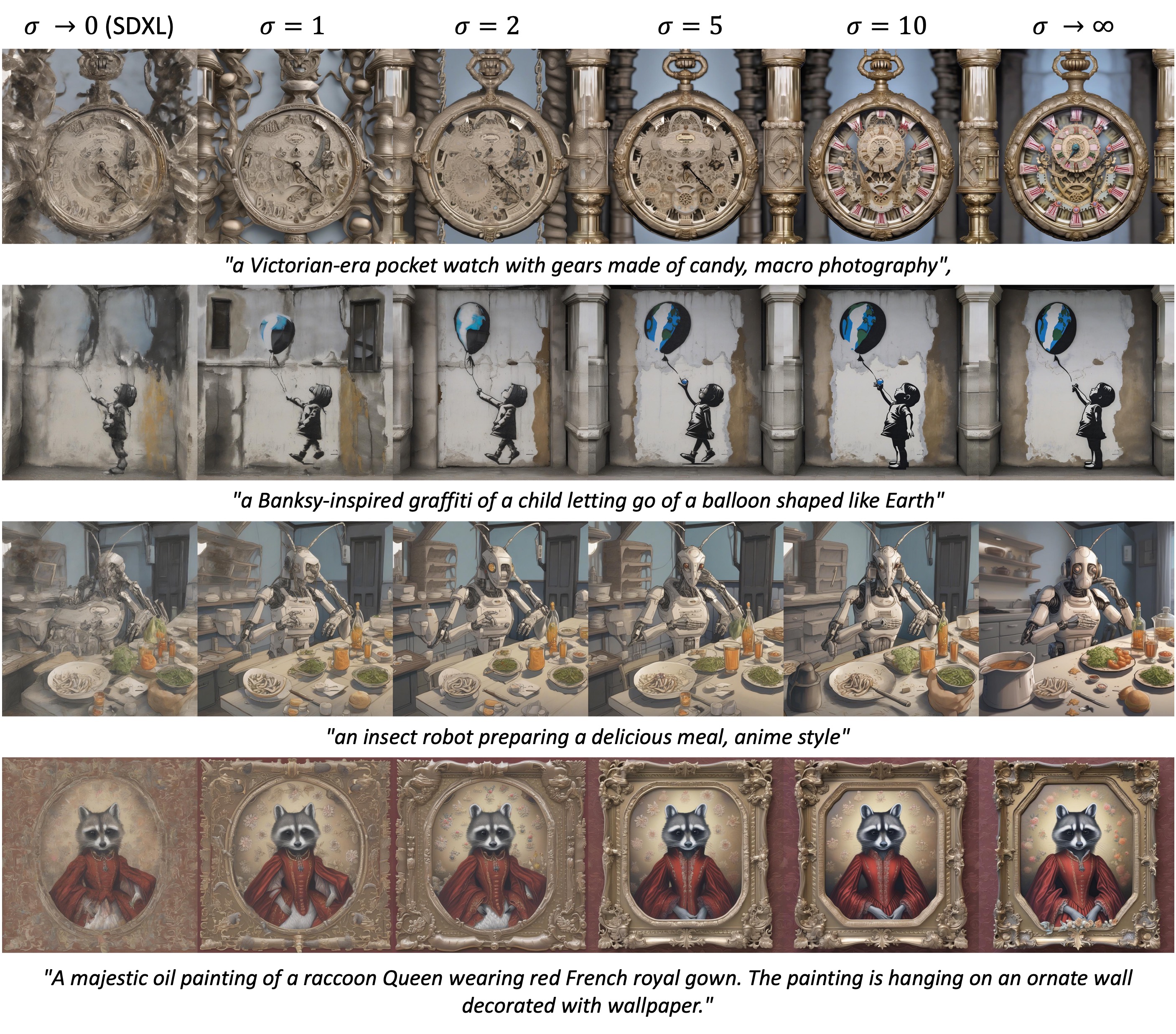}
\vspace{-15pt}
\caption{Text-conditional generation using SEG.}
\label{fig:appendix-cond}
\end{figure}

\clearpage
\newpage

\begin{figure}[t]
\centering
\includegraphics[width=1.0\linewidth]{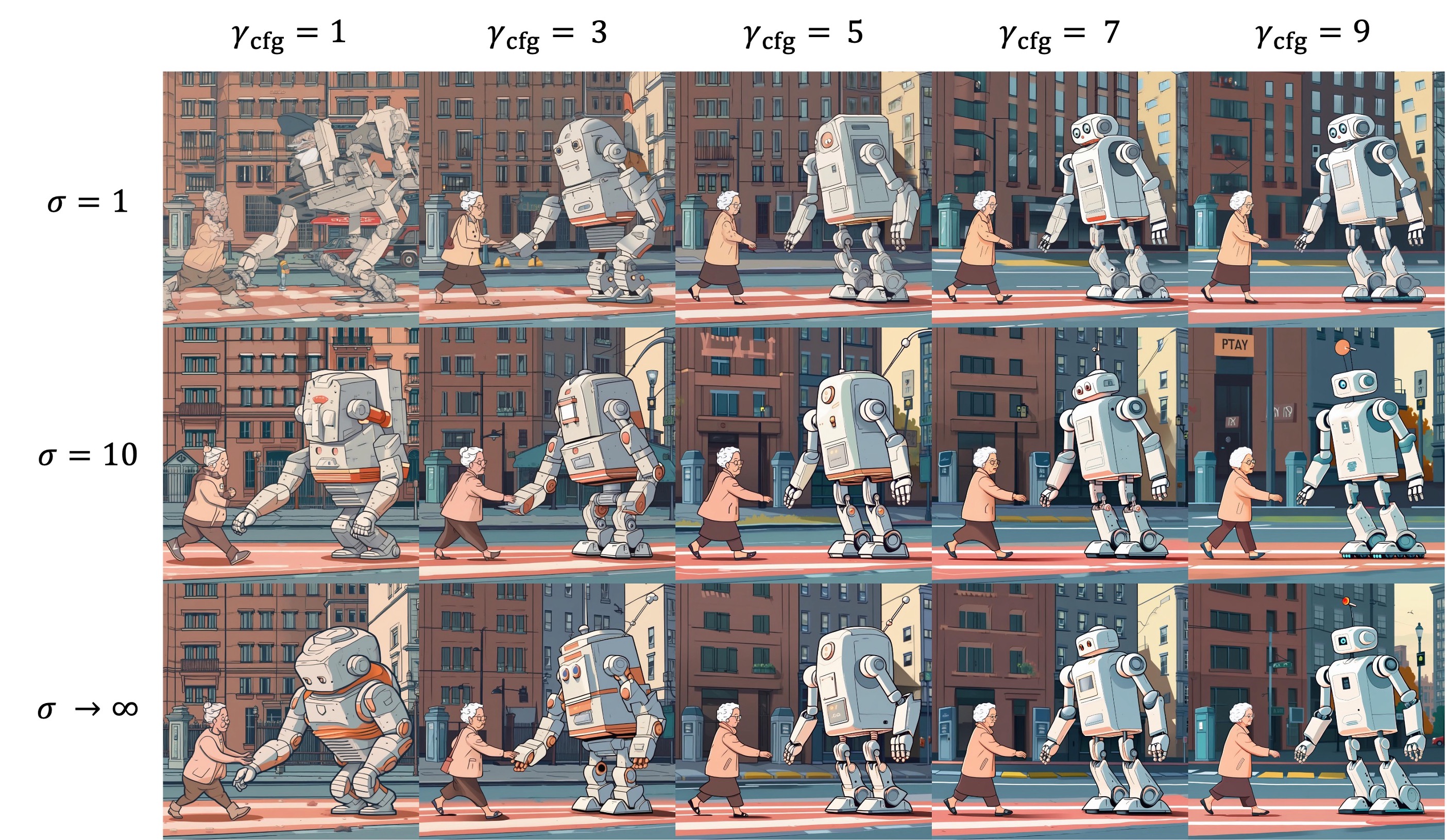}
\vspace{-15pt}
\caption{Experiment on the combination of SEG and CFG. $\gamma_\text{seg}$ is fixed to $3.0$. The prompt is \textit{"a friendly robot helping an old lady cross the street."} Without causing saturation or significant changes in the general structure, SEG improves the image quality.}
\label{fig:appendix-cfg1}
\end{figure}

\begin{figure}[t]
\centering
\includegraphics[width=1.0\linewidth]{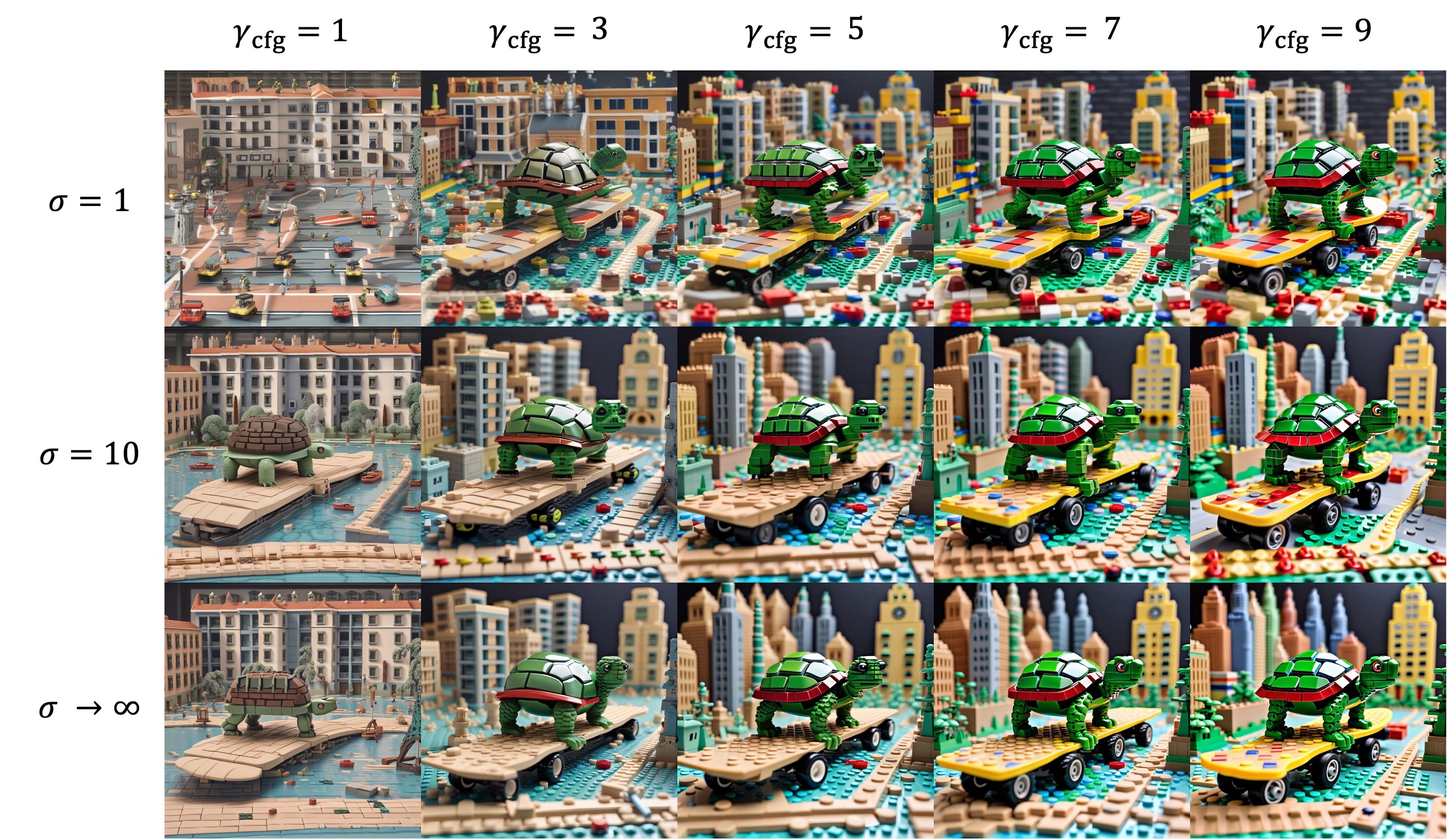}
\vspace{-15pt}
\caption{Experiment on the combination of SEG and CFG. $\gamma_\text{seg}$ is fixed to $3.0$. The prompt is \textit{"a skateboarding turtle zooming through a mini city made of Legos."}}
\label{fig:appendix-cfg2}
\end{figure}

\begin{figure}[t]
\centering
\includegraphics[width=1.0\linewidth]{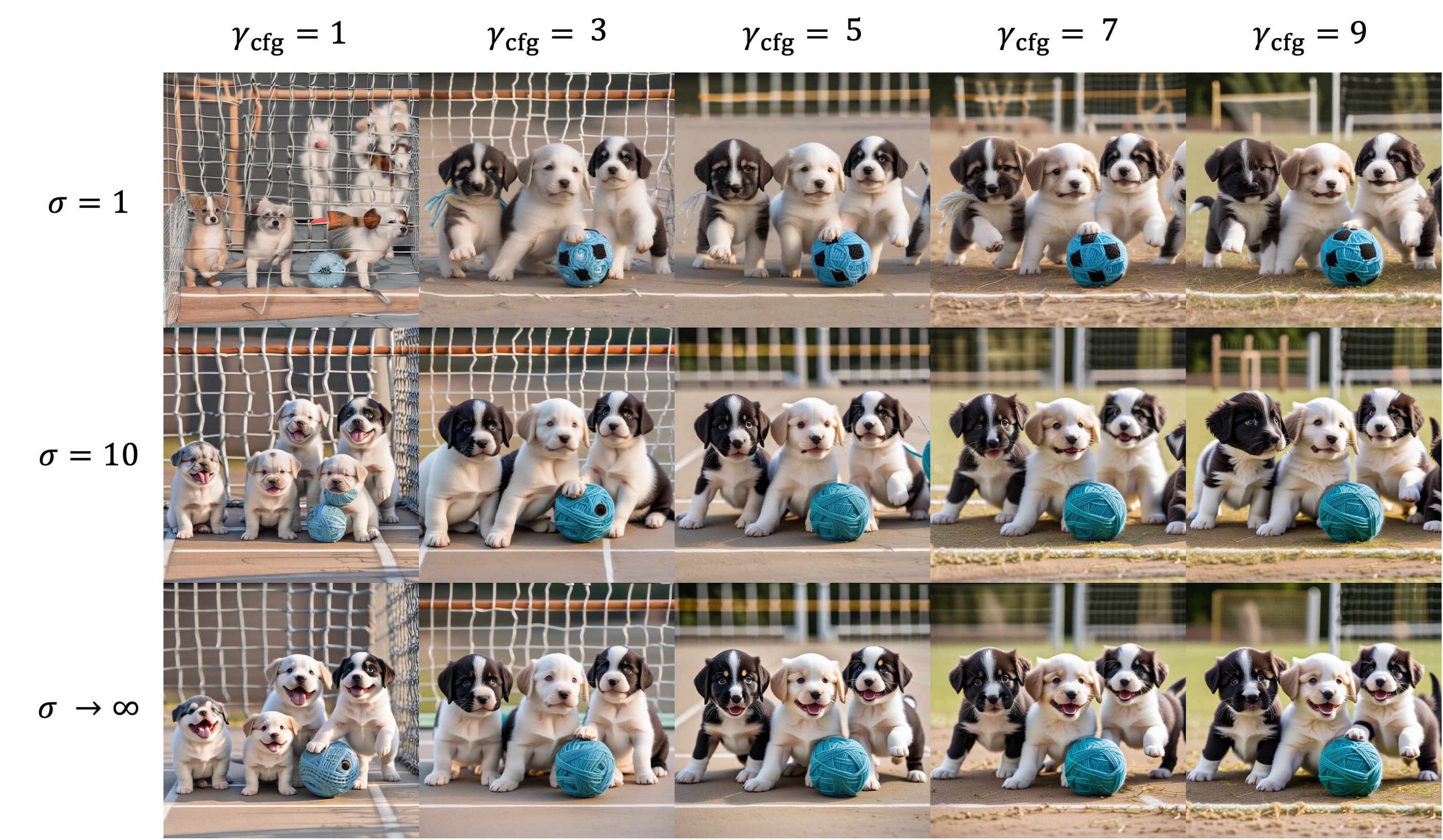}
\vspace{-15pt}
\caption{Experiment on the combination of SEG and CFG. $\gamma_\text{seg}$ is fixed to $3.0$. The prompt is \textit{"a group of puppies playing soccer with a ball of yarn."}}
\label{fig:appendix-cfg3}
\end{figure}

\begin{figure}[t]
\centering
\includegraphics[width=1.0\linewidth]{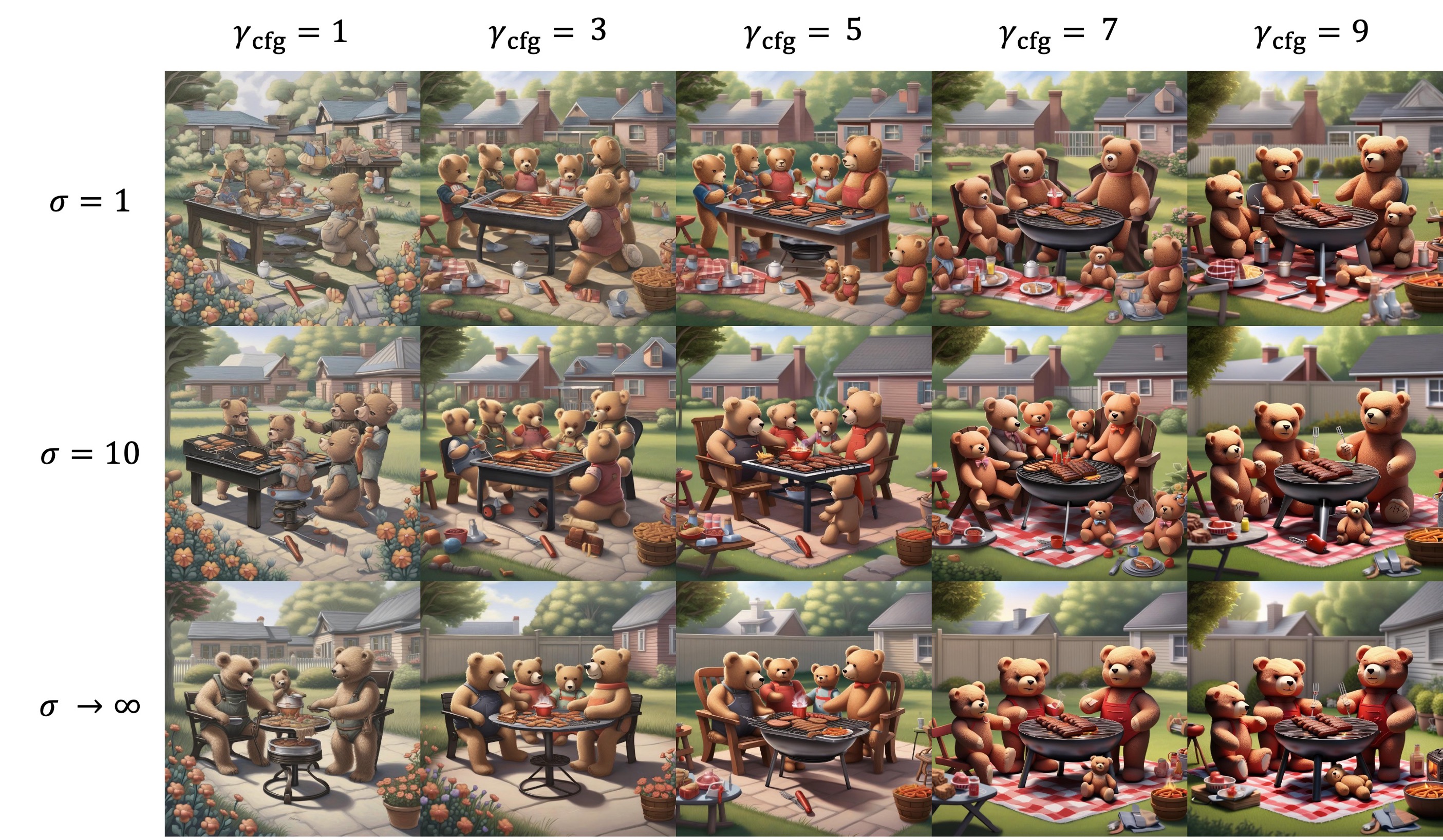}
\vspace{-15pt}
\caption{Experiment on the combination of SEG and CFG. $\gamma_\text{seg}$ is fixed to $3.0$. The prompt is \textit{"a family of teddy bears having a barbecue in their backyard."}}
\label{fig:appendix-cfg4}
\end{figure}

\begin{figure}[t]
\centering
\includegraphics[width=1.0\linewidth]{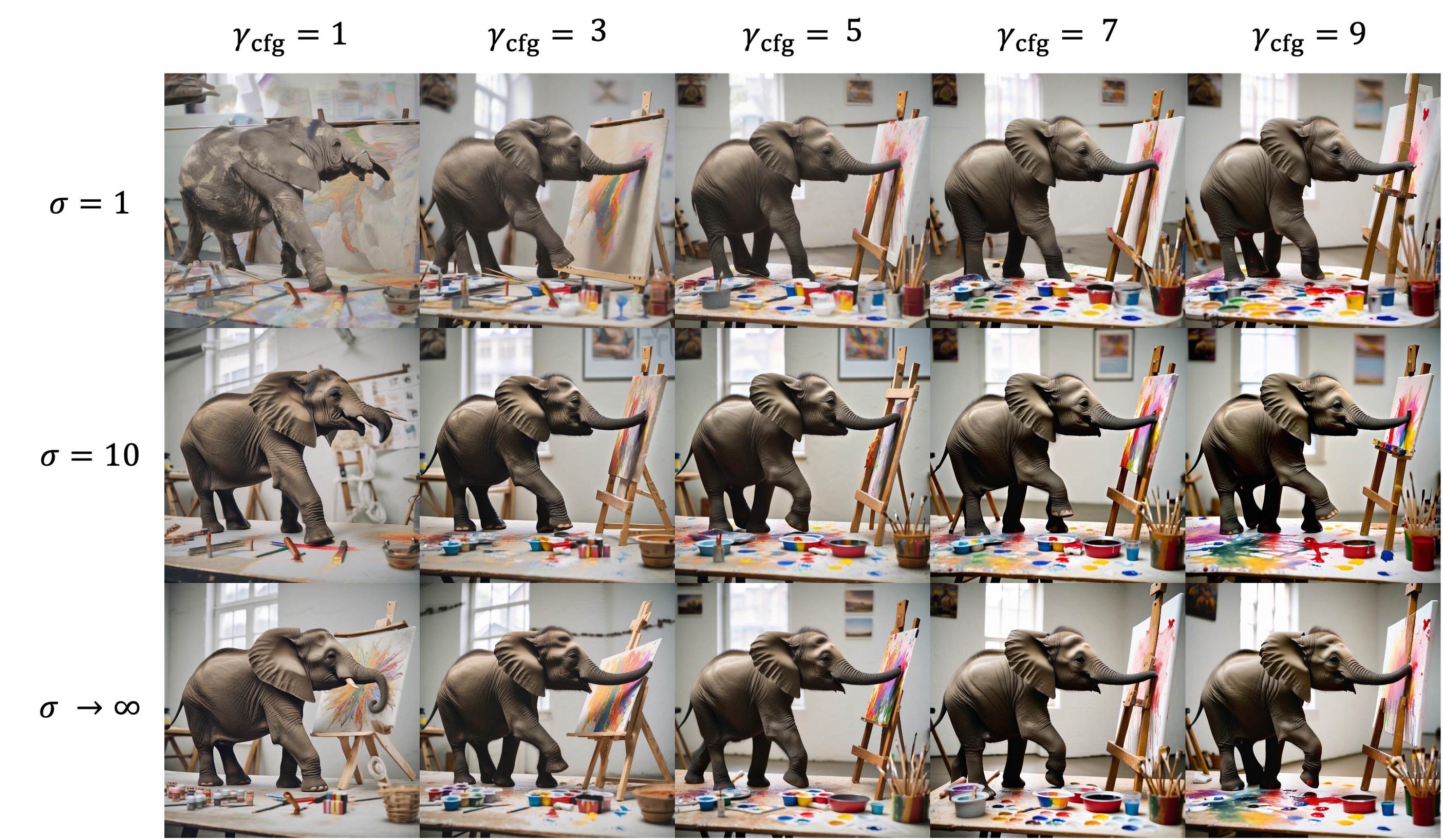}
\vspace{-15pt}
\caption{Experiment on the combination of SEG and CFG. $\gamma_\text{seg}$ is fixed to $3.0$. The prompt is \textit{"a baby elephant learning to paint with its trunk in an art studio."}}
\label{fig:appendix-cfg5}
\end{figure}

\begin{figure}[t]
\centering
\includegraphics[width=1.0\linewidth]{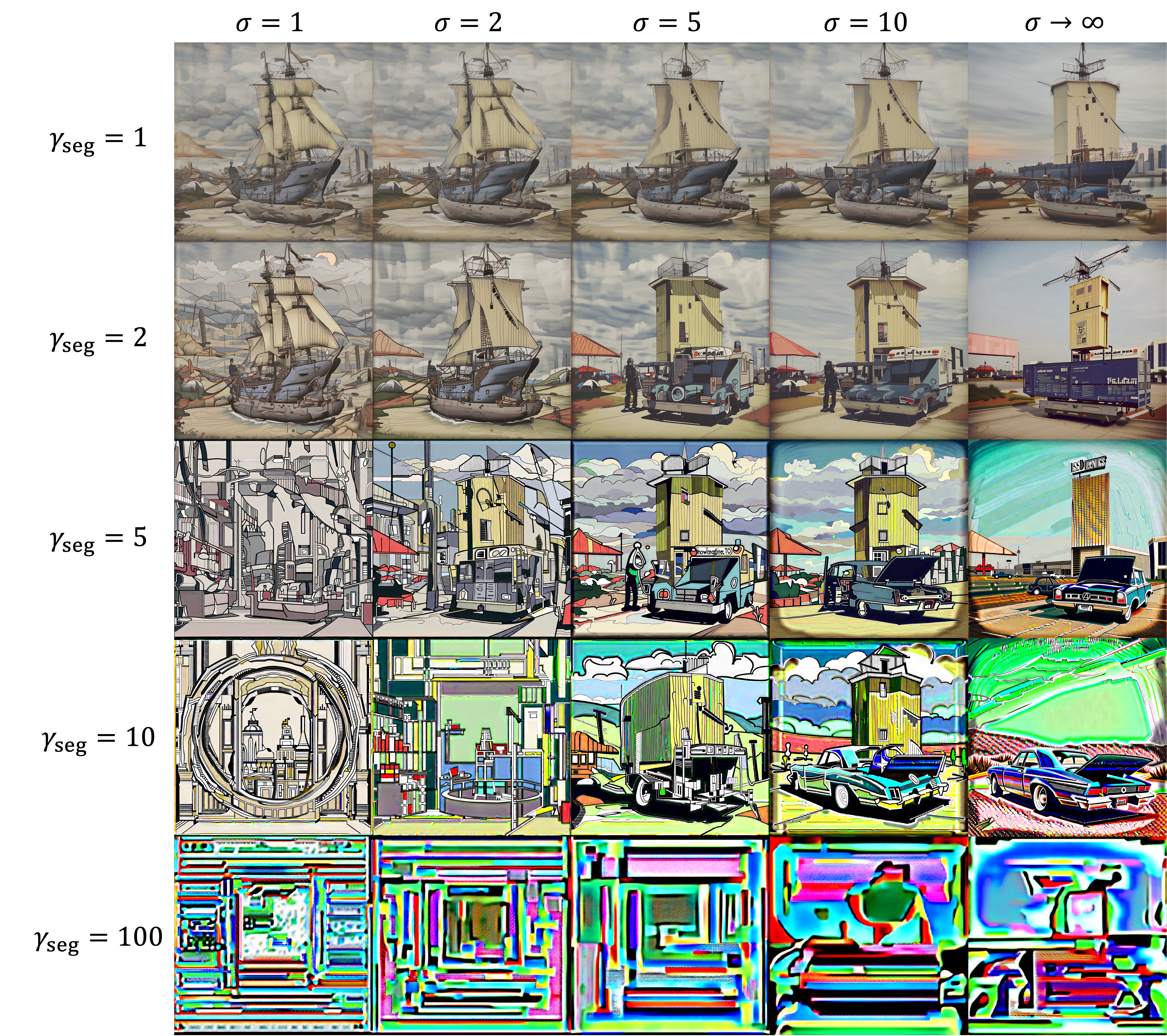}
\vspace{-15pt}
\caption{Unconditional generation result with controlled $\gamma_\mathrm{seg}$ and $\sigma$.}
\label{fig:appendix-abl-1}
\end{figure}


\end{document}